\documentclass{article} 
\usepackage{microtype}
\usepackage{graphicx}
\usepackage{subcaption}
\usepackage{booktabs} 

\usepackage{hyperref}

\usepackage[accepted]{icml2026} 

\usepackage{amsmath,amsfonts,mathtools,amssymb,mathrsfs,amsthm}
\usepackage{url}
\usepackage{tikz}
\usepackage{pifont} 
\usepackage{arydshln} 
\usepackage{multirow}
\usepackage{makecell}
\usepackage{threeparttable}
\usepackage{enumitem}
\usepackage{balance} 
\usepackage{soul}

\usepackage{algorithm}
\usepackage{algorithmic}

\usepackage{amsmath,amsfonts,bm}

\def\eqref#1{equation~\ref{#1}}

\def\1{\bm{1}}

\DeclareMathAlphabet{\mathsfit}{\encodingdefault}{\sfdefault}{m}{sl}
\SetMathAlphabet{\mathsfit}{bold}{\encodingdefault}{\sfdefault}{bx}{n}

\renewcommand{\algorithmicrequire}{\textbf{Input:}}
\renewcommand{\algorithmicensure}{\textbf{Output:}}

\usepackage{color}
\usepackage{xcolor}
\usepackage{colortbl}
\usepackage{arydshln}
\definecolor{graybg}{rgb}{0.96, 0.96, 0.96}
\definecolor{headergray}{gray}{0.96}
\definecolor{rowblue}{RGB}{242, 247, 255}
\definecolor{rowgray}{gray}{0.95}
\definecolor{bestred}{RGB}{200, 0, 0}
\definecolor{secondblue}{RGB}{0, 0, 200}
\definecolor{commentgreen}{RGB}{0, 150, 0}

\newcommand{\ModelName}{{\sc NeST}}

\theoremstyle{plain}
\newtheorem{theorem}{Theorem}

\theoremstyle{definition}

\theoremstyle{remark}

\icmltitlerunning{Nested Spatio-Temporal Time Series Forecasting}

\begin{document}

\twocolumn[
    \icmltitle{Nested Spatio-Temporal Time Series Forecasting}

\begin{icmlauthorlist}
    \icmlauthor{Yinghao Ai$^{*\diamond 1\ 2\ 4}$}{}
    \icmlauthor{Yukai Zhou$^{*\diamond 2\ 3}$}{}
    \icmlauthor{Ruoxi Jiang$^{\dagger 2\ 3}$}{}
    \icmlauthor{Junyi An$^{\dagger 2}$}{}
    \icmlauthor{Chao Qu$^{\dagger 2\ 3}$}{}
    \icmlauthor{Zhijian Zhou$^{2\ 3}$}{}
    \icmlauthor{Shiyu Wang$^6$}{}
    \icmlauthor{Fenglei Cao$^2$}{}
    \icmlauthor{Zenglin Xu$^{\dagger 2\ 3}$}{}
    \icmlauthor{Furao Shen$^{\dagger 1\ 5}$}{}
    \icmlauthor{Yuan Qi$^{\dagger 2\ 3}$}{}
    \end{icmlauthorlist}
    \icmlcorrespondingauthor{Ruoxi Jiang}{roxie\_jiang@fudan.edu.cn}
    \icmlcorrespondingauthor{Furao Shen}{frshen@nju.edu.cn}

    \icmlkeywords{Spatiotemporal Forecasting, Graph Neural Networks, ICML}

    \vskip 0.3in
]

\printAffiliationsAndNotice{
\icmlEqualContribution \quad 
$^\diamond$ Work done during the internship at Shanghai Academy of AI for Science. \quad 
$^\dagger$ Corresponding authors. 
$^1$National Key Laboratory for Novel Software Technology, Nanjing University, China 
$^2$Shanghai Academy of AI for Science, Shanghai 
$^3$Fudan University, Shanghai 
$^4$Department of Computer Science and Technology, Nanjing University 
$^5$School of Artificial Intelligence, Nanjing University 
$^6$ByteDance
}

\begin{abstract}
Spatiotemporal forecasting is critical for real-world applications like traffic management, yet capturing reliable interactions remains challenging under noisy and non-stationary conditions. 
Existing methods primarily rely on historical spatial priors, often failing to account for evolving temporal correlations and suffering from systematic errors. 
In this work, we propose a nested forecasting framework that couples future macro-level regional trends with micro-level historical observations, enabling top-down guidance from abstract future representations for fine-grained forecasting. 
Specifically, we employ a spectral clustering-based approach to construct semantically coherent regions, providing both theoretical and empirical evidence that this representation effectively filters systematic noise while preserving essential trends.
Building on this, we develop a progressive coarse-to-fine predictor to integrate these representative features into the inference process. This enables the model to leverage trend predictions to anticipate dynamic anomalies, such as periodic offsets, in advance. 
Furthermore, extensive experiments on multiple high-dimensional datasets demonstrate that our method consistently outperforms state-of-the-art baselines, validating the effectiveness of future macro-guided nested forecasting. 
\end{abstract}

\section{Introduction}
Spatio-temporal forecasting (STF) plays a pivotal role in modern intelligent systems, supporting diverse applications from urban traffic management to extreme weather prediction~\cite{RW:TrafficSurvey2024,RW:STFSurvey2025,RW:weather2025,RW:ICML2025,RW:ICML20252}. In these domains, accurate forecasting is indispensable for proactive decision-making, such as early congestion control~\cite{RW:cog} and emergency planning~\cite{RW:LongTerm2023}. Nevertheless, achieving robust predictions over extended horizons remains a significant challenge, primarily due to the complex spatial interactions and dynamic temporal patterns inherent in real-world systems~\cite{RW:DSTAGNN,RW:Nips2024,RW:DSTMamba,RW:STSSDL}.

As a specialized task within Multivariate Time Series (MTS) forecasting, STF focuses on improving prediction accuracy by effectively modeling spatial correlations~\cite{RW:BasicTS}. Early approaches \cite{RW:DCRNN,RW:STGCN} typically integrated prior topological structures directly into graph-based learners; however, these priors often require expert knowledge, and might overlook the intricate patterns in the dynamic feature space.
To better align with the inductive biases of dynamic data, subsequent methods~\cite{RW:GWNet,RW:MTGNN} introduced learnable adjacency matrices to adaptively capture latent edges with graph neural networks. Other research~\cite{RW:Multigraph2022,RW:PDFormer,RW:DMSTG2024,RW:MAGE} has extended this to the temporal dimension, employing time-varying and multi-view graphs to model evolving interactions. 
Despite these architectural advancements, existing frameworks face a critical limitation: the fine-grained, full-graph modeling prevalent in current methods is highly susceptible to system noise, which becomes particularly acute with the growth of spatial scale. Within this expanded search space, models are prone to learn spurious correlations~\cite{RW:spurious2023}, ultimately degrading the robustness of the learned representations.

To mitigate the impact of structural uncertainties and achieve robust forecasting, we investigate the utility of macro-level representations. A standard practice for constructing such representations involves the spatial aggregation or slicing; however, existing literature \cite{RW:HIEST,RW:AIMST2024,RW:PatchSTG} primarily treats these coarse-grained signals as auxiliary inputs to provide robust historical statistics. In this work, we aim to explore a more promising paradigm: leveraging coarse-grained representations to characterize future states, thereby serving as a stable structural guide for the forecasting process. This approach introduces a significant challenge: how to extract coarse-grained signals with high representational fidelity while maintaining topological and semantic alignment with their fine-grained counterparts.

Motivated by these insights, we propose \ModelName (\textbf{Ne}sted \textbf{S}patio-\textbf{T}emporal forecasting), a spatio-temporal forecasting framework that moves beyond micro-level modeling through macro-guided design with future awareness.
{\sc NeST} realizes the cross-horizon modeling of historical fine-grained and future coarse-grained dynamics through two core stages.
First, to extract representative macro-dynamics from local observations, we employ semantic spectral clustering~\cite{ng2001spectral}, operating on an affinity matrix constructed directly from raw feature sequences, adapting to dynamic semantic correlations without relying on static physical priors while yielding a compact representation space.
Second, to enable effective multi-scale interaction for temporal forecasting, we introduce a symmetric attention mechanism that facilitates bidirectional information flow across spatio-temporal scales. 
In particular, macro-level states are predicted multiple steps ahead, providing abstract future context that regularizes fine-grained forecasting. 
Crucially, the low-rank nature of the macro-state representation significantly reduces computational cost while preserving trend-level expressiveness. Our contributions are summarized as follows:

\begin{itemize}
    \item We propose \ModelName{}, a nested spatio-temporal forecasting framework that introduces macro-guided cross-horizon modeling, using predicted region-level futures as explicit top-down guidance to regularize and enhance fine-grained forecasting.
    \item We design a computationally efficient multi-scale architecture that leverages semantic spectral clustering for capturing dynamic representative features, enabling robust alignment while better preserving systematic trends.
    \item We validate the effectiveness of \ModelName{} through extensive experiments on multiple large-scale datasets, achieving consistent improvements over state-of-the-art methods across diverse metrics.
\end{itemize}
\section{Related Works}

\textbf{Spatio-Temporal Forecasting}. The core objective of spatio-temporal forecasting is to predict future system states by capturing complex dependencies present in historical observations. Early works combined recurrent or temporal convolution modules with graph encoders that use fixed topologies to model spatial correlations~\cite{RW:DCRNN,RW:STGCN}. To relax the reliance on predefined structures, later methods such as {\sc GWNET}, {\sc MTGNN}, and {\sc AGCRN} introduced adaptive embeddings to infer latent spatial dependencies directly from observations~\cite{RW:GWNet,RW:MTGNN,RW:AGCRN}. Although these data-driven methods relax the reliance on predefined graphs by learning spatial relations from data, the inferred relational structures are typically static during inference. As a result, they still struggle to capture spatial dependencies that evolve over time, which are common in complex real-world systems~\cite{RW:dynamic2021}. To capture time-varying connectivity, a recent line of work leverages attention and dynamic message-passing mechanisms that adapt relations over time; representative examples include {\sc DSTAGNN}, {\sc MegaCRN} and {\sc STAEFormer}, among others~\cite{RW:DSTAGNN,RW:MegaCRN,RW:STAEFormer,RW:STDGRL2023,RW:Pivot2024,RW:STwea2024,RW:AdpTraFor2025}. These approaches enable models to track changing dependencies and better handle nonstationary spatio-temporal dynamics~\cite{RW:nonstation2025}. Building on these dynamic capabilities, recent research further targets systemic challenges such as spatial heterogeneity~\cite{RW:STSSL2023,RW:HimNet} and the scalability of massive networks~\cite{RW:UniST2024,RW:PatchSTG}. Despite these advances, most existing architectures remain focused on micro-scale modeling, remaining sensitive to noise and short-term irregularities.

\textbf{Hierarchical Spatio-Temporal Modeling.} Hierarchical structures help capture multi-scale spatio-temporal dynamics~\cite{RW:STF2025survey}. Early works such as {\sc HGCN} and {\sc HRNR} grouped sensors into static regions to summarize macro-scale patterns~\cite{RW:HGCN,RW:HRNR}, and later extensions (e.g., {\sc HSDGNN}) modeled more complex structural dependencies~\cite{ZHOU2025110304}. More recent methods introduce flexible cross-scale interactions: {\sc HiSTGNN}, {\sc HIEST} and {\sc AIMST} propose dynamic mechanisms to exchange information between sensor-level and region-level representations, while {\sc HSTAN} and {\sc HSFE} apply multi-level attention and feature fusion to integrate global and local correlations~\cite{RW:HiSTGNN,RW:HIEST,RW:AIMST2024,RW:hier_down2024}. 

Beyond spatial hierarchies, several works explore temporal multi-scale modeling ~\cite{RW:NHiTS,RW:MICN,RW:MAGNN2023,RW:Timemixer2024}. 
However, most prior methods treat hierarchy primarily as a mechanism for historical representations. They typically employ a single-stage projection that maps past observations directly, forcing the model to implicitly infer evolving trends from noisy data, making predictions highly susceptible to input perturbations. 
On the other hand, a recent work on neural operator ~\citep{RW:Jiang2025} demonstrates that incorporating future information can improve long-term stability and physical consistency.
However, its formulation assumes regular spatiotemporal grids and is less suitable for irregular graph-structured data with missing values and high noise.
In this work, \ModelName{} establishes a hierarchical forecasting paradigm that explicitly predicts future macro-level states as structural guidance.
By leveraging representative future dynamics to guide fine-grained generation, \ModelName{} stabilizes the forecasting process and alleviates the limitations of single-stage prediction.
\begin{figure*}[t]
    \centering
    \includegraphics[width=\textwidth]{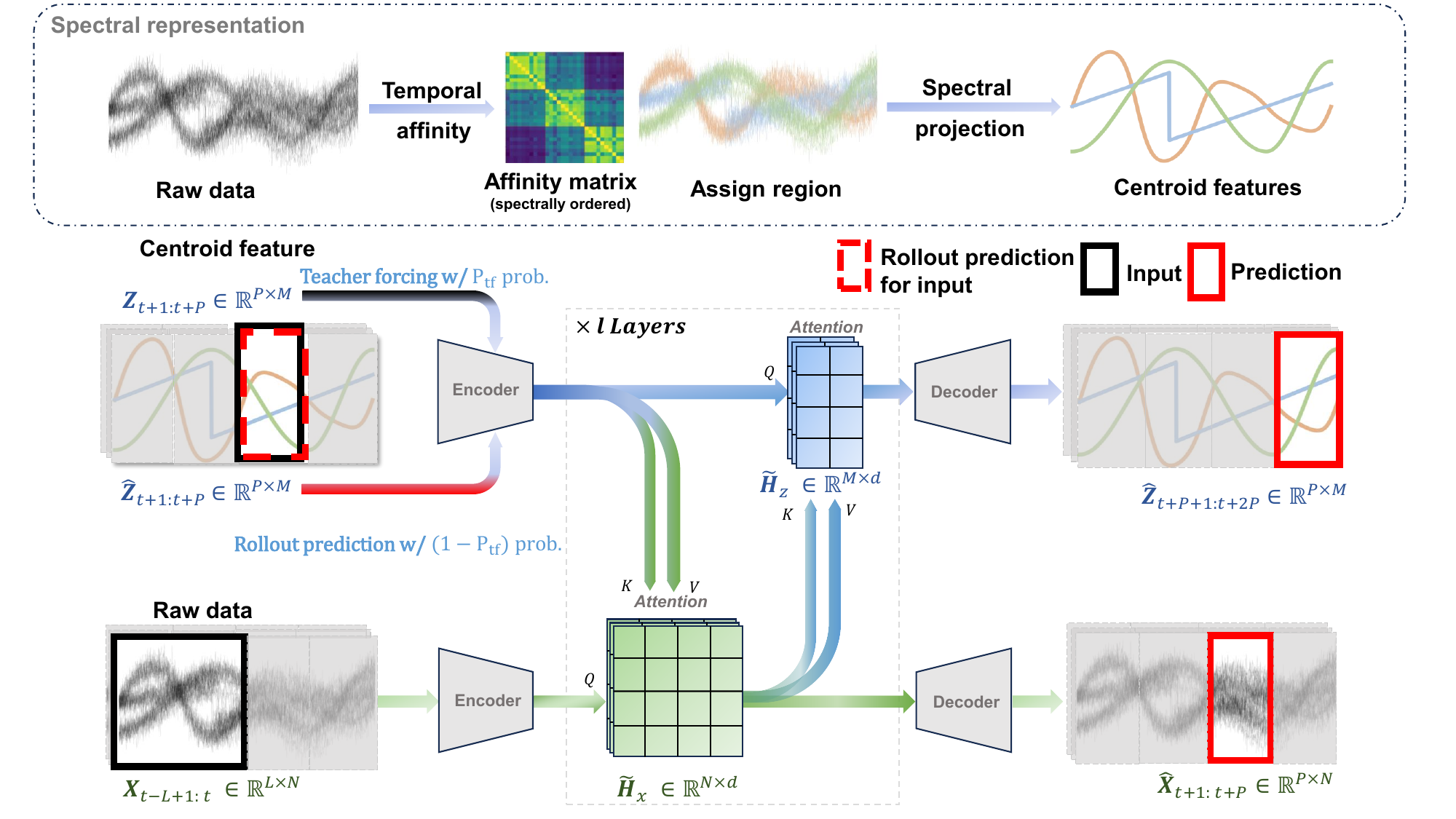}
    \caption{Diagram of \ModelName.
    (i) \textit{Spectral representation:} Node-level time series are partitioned via spectral clustering to derive representative regional dynamics $\mathbf{Z}$, which serve as macroscopic structural anchors for the system.
    (ii) \textit{Training phase:} We process historical node signals $\mathbf{X}_{t-L+1:t}$ alongside future regional guidance $\mathbf{Z}_{t+1:t+P}$ using decoupled encoders. To bridge the discrepancy between training and inference, we employ a scheduled sampling strategy: ground-truth regional features are provided via teacher-forcing with probability $P_{\rm{tf}}$, while predicted rollouts $\hat{\mathbf{Z}}_{t+1:t+P}$ are utilized with probability $1-P_{\rm{tf}}$. 
    Bidirectional information flow between scales is facilitated via cross-attention (MLP is omitted for clarity), which effectively bounds the interaction complexity to the number of clusters $M$ (where $M < N$), ensuring future-oriented guidance while maintaining linear scalability relative to the number of nodes.
    }
    \label{fig:nest_framework}
\end{figure*}
\section{Preliminary}
We consider a spatiotemporal system consisting of $N$ correlated sensors, where observations at time $t$ are denoted by $\mathbf{X}_t \in \mathbb{R}^{N \times C}$. 
Our goal is to forecast a future sequence of length $H$ given a historical context of length $L$, denoted as $\mathbf{X}_{t-L+1:t} \in \mathbb{R}^{N \times L \times C}$.

To effectively model long-range dependencies while maintaining computational efficiency, we frame the problem as a patch-based autoregressive forecasting task. 
At each step, the model predicts a subsequent future patch of length $P$ from the preceding $L$ steps. 
During training, the model is optimized via single-step supervision on the predicted patch $\hat{\mathbf{X}}_{t+1 : t+P}$. During inference, the full horizon $H$ is generated auto-regressively: the model consumes its own predicted patches as context for subsequent iterations until the entire sequence is realized.

\section{Method}
In this section, we present the {\sc NeST} (Nested Spatio-Temporal) framework. {\sc NeST} adopts a hierarchical coarse-to-fine paradigm, where fine-grained node-level predictions are guided by stable, macroscopic centroid dynamics to mitigate the impact of localized noise.

\subsection{From raw data to centroid features}
Direct node-level forecasting is challenging due to the high dimensionality of the output space, as well as the presence of local noise, missing values, and short-term irregular fluctuations.
To address this, we leverage spectral clustering~\cite{ng2001spectral, shi2000spectral} to extract latent region-level representations, that serve as structural anchors for the system, enabling abstract guidance for fine-grained node-level forecasting.

\paragraph{Constructing the Temporal Affinity Matrix.}
An effective regionalization should reflect temporal coherence, meaning that nodes assigned to the same region exhibit consistent long-term co-movement patterns.
Affinity matrices derived from physical proximity or predefined topology are often insufficient, as they fail to capture latent semantic relationships that evolve with the system dynamics.

To address this limitation, we construct a feature-driven affinity matrix 
$\mathbf{A} \in \mathbb{R}^{N \times N}$ directly from raw temporal observations.
Specifically, we partition the training sequence into $\tilde{T}$ non-overlapping temporal chunks, where $\tilde{T}$ is chosen to align with the intrinsic periodicity of the data (see Appendix~\ref{app:chunk_partition}).
For each node, we compute an averaged representation within each chunk and define pairwise affinities as
\begin{equation}
\mathbf{A}_{ij} = \exp\!\left(
- \frac{1}{2\sigma^2 \tilde{T}}
\sum_{k=1}^{\tilde{T}}
\left\lVert \mathbf{X}_i^{(k)} - \mathbf{X}_j^{(k)} \right\rVert_2^2
\right),
\end{equation}
where $\mathbf{X}_i^{(k)} \in \mathbb{R}^{T_k}$ denotes the temporal subsequence of node $i$ in chunk $k$, and $\sigma$ controls the kernel bandwidth.
This construction emphasizes similarity in long-term temporal evolution rather than short-term fluctuations.

\textbf{Spectral representation.}
Given the spatiotemporal affinity matrix $\mathbf{A}$, we compute the normalized graph Laplacian
\begin{equation}
\mathbf{L}_{\rm{sym}} = \mathbf{I} - \mathbf{D}^{-\frac{1}{2}} \mathbf{A} \mathbf{D}^{-\frac{1}{2}},
\end{equation}
where $D$ is the degree matrix with $D_{ii} = \sum_{j} \mathbf{A}_{ij}$. The Laplacian characterizes how node features vary over the learned affinity graph: nodes with strong affinities are encouraged to have similar embeddings, while weakly connected nodes are allowed to diverge. 
In particular, the low-frequency eigenvectors of $\mathbf{L}_{\rm{sym}}$ capture dominant and globally consistent correlation structures in the spatiotemporal system.
These components provide a principled basis for identifying coherent latent regions aligned by long-range temporal behavior.

We then obtain $M$ regions with $M < N$ by applying K-means clustering to the spectral embeddings, resulting in an assignment matrix $S \in \{0,1\}^{N \times M}$.
This yields an assignment matrix $\mathbf{S} \in \{0,1\}^{N \times M}$, where $S_{i,m}=1$ indicates that node $i$ is assigned to region $m$.
The region-level representation at time $t$ is computed via average pooling:
\begin{equation}
\mathbf{Z}_{t,m} = \frac{\sum_{i=1}^{N} S_{i,m} \mathbf{X}_{t,i}}{\sum_{i=1}^{N} S_{i,m}}, 
\quad m = 1, \dots, M,
\end{equation}
where $\mathbf{X}_{t,i} \in \mathbb{R}^{C}$ denotes the feature of node $i$ at time $t$. 
This operation compresses node-level observations into compact centroid features for hierarchical forecasting. Crucially, this aggregation acts as a natural low-pass filter that smooths out high-frequency local anomalies while preserving underlying regional trends. A visual demonstration of how these centroid features provide stable anchors compared to noisy raw sequences is provided in Appendix~\ref{app:visualization}.

To formally ground this intuition, assuming independent Gaussian noise $\epsilon_i$, we model the node observations as $\mathbf{X}_{i}=\mathbf{S}_{i} + \epsilon_{i}$. Theorem \ref{theo:SNR} demonstrates that aggregating nodes with similar patterns into latent regions significantly enhances the Signal-to-Noise Ratio (SNR).

\begin{theorem}\label{theo:SNR}
Consider a graph signal where the noise at each node is independent. Let $ \mathcal{C}_m $ denote a cluster of nodes with cardinality $|\mathcal{C}_m|$, and let $ \mathbf{Z}_m $ be the corresponding cluster center. Then the SNR of the cluster center satisfies
\begin{equation}
\mathrm{SNR}(\mathbf{Z}_m) \geq [ 1 + (|\mathcal{C}_m| - 1) \rho_m ] \cdot \overline{\mathrm{SNR}}_m,
\end{equation}
where $ \overline{\mathrm{SNR}}_m $ is the average SNR of the individual signals within cluster $ \mathcal{C}_m $, and $ \rho_m $ is the average correlation coefficient among the true signals in the cluster:
\begin{equation}
\rho_m = \frac{1}{|\mathcal{C}_m|(|\mathcal{C}_m| - 1)} \sum_{i \neq j \in \mathcal{C}_m} \mathrm{Corr}(\mathbf{s}_i, \mathbf{s}_j).
\end{equation}
\end{theorem}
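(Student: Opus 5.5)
The plan is a direct second-moment computation under the observation model $\mathbf{X}_i=\mathbf{s}_i+\epsilon_i$ introduced before Theorem~\ref{theo:SNR}. Here $\mathbf{s}_i$ is the true signal of node $i$ and the noise terms $\epsilon_i$ are zero-mean, mutually independent, and independent of the signals. Write $n=|\mathcal{C}_m|$. Let $P_i=\mathrm{Var}(\mathbf{s}_i)$ be the signal power, $\sigma_i^2=\mathrm{Var}(\epsilon_i)$ the noise power, and $\mathrm{SNR}_i=P_i/\sigma_i^2$, so that $\overline{\mathrm{SNR}}_m=\frac1n\sum_{i\in\mathcal{C}_m}\mathrm{SNR}_i$. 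By the average-pooling definition of $\mathbf{Z}_{t,m}$, the cluster center splits as $\mathbf{Z}_m=\frac1n\sum_{i\in\mathcal{C}_m}\mathbf{s}_i+\frac1n\sum_{i\in\mathcal{C}_m}\epsilon_i$, a signal part plus a noise part. I would therefore define $\mathrm{SNR}(\mathbf{Z}_m)$ as the variance of the first part divided by the variance of the second.

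First I compute the two variances. Independence kills all noise cross terms, so the noise variance is $\frac{1}{n^2}\sum_i\sigma_i^2$. The signal variance is $\frac{1}{n^2}\bigl(\sum_i P_i+\sum_{i\neq j}\mathrm{Cov}(\mathbf{s}_i,\mathbf{s}_j)\bigr)$. Writing $\mathrm{Cov}(\mathbf{s}_i,\mathbf{s}_j)=\mathrm{Corr}(\mathbf{s}_i,\mathbf{s}_j)\sqrt{P_iP_j}$ and using the definition of $\rho_m$ then gives the signal variance in terms of the powers and correlations.

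The main obstacle is heterogeneity. With unequal $P_i$ or $\sigma_i^2$, the stated bound can fail. In particular, AM--HM gives $\frac1n\sum_i 1/\sigma_i^2\ge n/\sum_i\sigma_i^2$, which points the wrong way when one compares the pooled noise with the average of individual SNRs. I would therefore make explicit the homogeneity assumption that the theorem implicitly relies on: within a cluster the signals have a common power $P$ (consistent with the per-node normalization used in preprocessing) and the noise has a common variance $\sigma^2$. Both are natural for nodes grouped by the spectral clustering step.

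Under this assumption the signal variance becomes $\frac{1}{n^2}\bigl(nP+n(n-1)\rho_mP\bigr)=\frac{P}{n}[1+(n-1)\rho_m]$, and the noise variance becomes $\sigma^2/n$. Their ratio is $[1+(n-1)\rho_m]\,P/\sigma^2=[1+(n-1)\rho_m]\,\overline{\mathrm{SNR}}_m$, so the claimed inequality holds with equality.

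To recover a genuine inequality under weaker hypotheses, I would relax the signal-power condition. If only the noise variance is common, the quantity to compare is $\sum_{i\neq j}\mathrm{Corr}_{ij}\sqrt{P_iP_j}$ against $(n-1)\rho_m\sum_iP_i$. This holds, for example, when the correlations are nonnegative and the powers are comparable, in which case I would bound $\sqrt{P_iP_j}$ from below using the minimum and maximum powers. If that comparison is too lossy, I would simply present the homogeneous case as the theorem's setting.

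Finally, I would remark that $\rho_m$ close to $1$, as the spectral clustering aims for, yields an SNR gain of nearly a factor $n$. When $\rho_m=0$ the bound reduces to no loss of SNR relative to the individual nodes.
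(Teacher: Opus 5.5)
Your core computation matches the paper's. You split $\mathbf{Z}_m$ into averaged signal plus averaged noise, use independence to get noise power $\sigma^2/|\mathcal{C}_m|$, and expand the power of $\sum_i \mathbf{s}_i$ into diagonal terms plus pairwise correlation terms. The paper uses deterministic $\mathbf{S}_i$, $\mathcal{N}(0,\sigma^2\mathbf{I}_C)$ noise, $\mathrm{SNR}=\|\mathbf{S}\|^2/\sigma^2$ and cosine similarity for $\mathrm{Corr}$; your variance/Pearson version is the same inner-product argument.

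The difference is what each proof assumes. The paper takes a common noise variance but lets the signal norms differ. It covers the gap with the step $\|\sum_i \mathbf{S}_i\|^2 \ge (\sum_i \|\mathbf{S}_i\|)^2$, which it calls Cauchy--Schwarz. That step is the triangle inequality run backwards, and the final bound fails for unequal norms. For example, take $|\mathcal{C}_m|=2$, $C=1$, $S_1=1$, $S_2=2$. Then $\rho_m=1$ and $\mathrm{SNR}(\mathbf{Z}_m)=2\cdot(1.5)^2/\sigma^2=4.5/\sigma^2$, while $[1+\rho_m]\,\overline{\mathrm{SNR}}_m=2\cdot 2.5/\sigma^2=5/\sigma^2$. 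So under the paper's own definitions, your homogeneity hypothesis is necessary for the stated inequality, not just convenient. Under it, your derivation with equality is the correct content of the theorem, and your AM--HM remark about heterogeneous noise is also right.

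The one thing to fix is your proposed relaxation. With a common $\sigma^2$ and nonnegative correlations, the bound $\sqrt{P_iP_j}\ge P_{\min}$ only gives
\[
\mathrm{SNR}(\mathbf{Z}_m)\ \ge\ \Bigl[1+(|\mathcal{C}_m|-1)\,\rho_m\,\tfrac{P_{\min}}{\bar P}\Bigr]\,\overline{\mathrm{SNR}}_m,\qquad \bar P=\tfrac{1}{|\mathcal{C}_m|}\textstyle\sum_i P_i .
\]
This is strictly weaker than the theorem unless all powers coincide. ``Comparable powers'' cannot recover the stated constant. If all correlations equal $1$, the comparison you need, $\sum_{i\neq j}\sqrt{P_iP_j}\ge(|\mathcal{C}_m|-1)\sum_i P_i$, is equivalent to $(\sum_i\sqrt{P_i})^2\ge|\mathcal{C}_m|\sum_i P_i$. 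That fails whenever the $P_i$ are not all equal, however close they are. So either make equal power and equal noise variance the theorem's explicit setting, as your fallback suggests, or state the weakened bound with the factor $P_{\min}/\bar P$.
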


This theorem provides the theoretical guarantee that our region-level modeling effectively suppresses local noise by maximizing intra-cluster correlation $\rho_{m}$; detailed proofs and derivations are provided in Appendix~\ref{app:therom}.

\subsection{Nested Spatio-Temporal Forecasting}

\textbf{Dual-Horizon Encoding.}
To capture multi-scale dynamics across time and facilitate information interaction, we employ a projection layer that maps sequenced data $\mathbf{X}$ and $\mathbf{Z}$ into a $d$-dimensional space.
At time-step $t$, the model processes two inputs that occupy entirely different temporal horizons: the historical node-level sequence $\mathbf{X}_{t-L+1 : t} \in \mathbb{R}^{N \times L \times C}$, covering the past $L$ steps, and the regional guidance $\mathbf{Z}_{t+1 : t+P} \in \mathbb{R}^{M\times P \times C}$, providing representative features for the targeted future. To unify these inputs at different scale, we first flatten their temporal and feature axes and project them into the latent space.
\begin{align*}
\mathbf{H}_x^{\text{past}} &= \text{Linear}_x(\text{Flatten}(\mathbf{X}_{t-L+1 : t})) + \mathbf{TE}_x + \mathbf{SE}_{\text{node}}, \\
    \mathbf{H}_z^{\text{fut}} &= \text{Linear}_z(\text{Flatten}(\mathbf{Z}_{t+1 : t+P})) + \mathbf{TE}_z + \mathbf{SE}_{\text{region}},
\end{align*} 
where $\mathbf{H}_{x}^{\text{past}} \in \mathbb{R}^{N\times d}$ and $\mathbf{H}_{z}^{\text{fut}} \in \mathbb{R}^{M \times d}$ represent the latent tokens for the past node context and future region guidance, respectively. 
To explicitly encode structural and periodic dynamics, we incorporate spatial embeddings ($\mathbf{SE}_{\text{node}}$ and $\mathbf{SE}_{\text{region}}$) to capture spatial identities, along with temporal embeddings ($\mathbf{TE}_x$ and $\mathbf{TE}_z$), which are learnable Time-of-Day and Day-of-Week features averaged across the horizon.

\paragraph{Cross-Scale Interaction.}
To couple fine-grained historical dynamics with coarse-grained future trends, we introduce a bidirectional cross-attention mechanism.
Given a standard attention operator $\text{Attn}(Q,K,V)$, interaction proceeds in two ways.

First, a top-down guidance step allows node-level tokens to query regional tokens:
\begin{equation}
\tilde{\mathbf{H}}_x =
\text{Attn}(\mathbf{H}_x^{\text{past}}, \mathbf{H}_z^{\text{fut}}, \mathbf{H}_z^{\text{fut}}),
\end{equation}
enabling node representations to incorporate macroscopic evolutionary trends.

Next, a bottom-up refinement step updates regional tokens by querying the enriched node features:
\begin{equation}
\tilde{\mathbf{H}}_z =
\text{Attn}(\mathbf{H}_z^{\text{fut}}, \tilde{\mathbf{H}}_x, \tilde{\mathbf{H}}_x).
\end{equation}
This step anchors regional guidance in the most recent fine-grained context.
Together, these interactions yield representations that are both locally detailed and globally predictive.

\paragraph{Dual-Head Decoding.}
Following cross-scale interaction, we decode both fine- and coarse-grained predictions.
A node-level decoding head produces
\[
\hat{\mathbf{X}}_{t+1:t+P} = \text{Proj}_x(\tilde{\mathbf{H}}_x),
\]
while a regional decoding head predicts the next segment of macro dynamics:
\[
\hat{\mathbf{Z}}_{t+P+1:t+2P} = \text{Proj}_z(\tilde{\mathbf{H}}_z).
\]
The predicted regional states are recursively used as guidance for subsequent decoding steps, enabling nested auto-regressive generation.

\paragraph{Teacher forcing and multi-step ahead rollout.}
During inference, future regional guidance $\mathbf{Z}_{t+1:t+P}$ is unavailable, whereas training relies on teacher forcing and rollout prediction.
To bridge this discrepancy, we first introduce a boundary modeling strategy based on masked guidance reconstruction.

During training, a proportion of regional tokens is replaced with zero masks, and a dedicated boundary decoder is trained to recover the missing guidance:
\begin{equation}
\hat{\mathbf{Z}}_{t+1:t+P} =
\text{Proj}_{\text{bd}}\!\left(
\text{Attn}(\mathbf{H}^{\text{zeros}}_z, \tilde{\mathbf{H}}_x, \tilde{\mathbf{H}}_x)
\right).
\end{equation}
, where $\mathbf{H}^{\text{zeros}}_z$ is the hidden state encoded from an all-zero mask, capturing the prior state before cross-scale interaction.

Then, building on this initialized boundary $\hat{\mathbf{Z}}_{t+1:t+P}$, we execute a multi-step rollout. 
Specifically, we let the rollout prediction for future macro-state $\hat{\mathbf{Z}}_{t+P+1:t+2P}$ be fed back as the guidance for the next time window. This establishes a coherent auto-regressive loop where evolving macro-trends continuously anchor and regularize the long-term micro-level predictions.

At inference time, guidance is initialized with zero masks, and the reconstructed $\hat{\mathbf{Z}}_{t+1:t+P}$ serves as the structural anchor for node-level prediction.
This strategy aligns training and inference behavior and stabilizes early rollout steps. 

\subsection{Uncertainty-Aware guidance and Complexity}

\paragraph{Robustness via Quantile Regression.}
To mitigate error accumulation and estimate the uncertainty caused by inaccurate macro-level guidance, we explicitly model regional dynamics as predictive distributions rather than point estimates.
We adopt quantile regression~\cite{1978Regression} to estimate multiple conditional quantiles $\{\tau_q\}_{q=1}^{Q}$ of future regional states, thereby capturing epistemic uncertainty in coarse-grained evolution:
\begin{equation}
\hat{\mathbf{Z}}_{t+1:t+P}^{(\tau_q)} =
f^{(\tau_q)}\!\left(\tilde{\mathbf{H}}_{z}\right),
\quad q = 1,\dots,Q.
\end{equation}
For inference, we use the median prediction ($\tau = 0.5$) as deterministic guidance for downstream node-level forecasting.
This design leverages the inherent stability of macroscopic dynamics while reducing sensitivity to local noise and outliers, leading to more reliable long-horizon predictions.
Details of the quantile regression loss are provided in Appendix~\ref{app:quantile_loss}.

\paragraph{Training Objective.}
The proposed model is trained end-to-end under a multi-task objective that jointly optimizes:
(i) fine-grained node-level forecasting,
(ii) uncertainty-aware regional forecasting, and
(iii) masked guidance reconstruction to handle missing future context.
The overall loss is defined as
\begin{equation}
\mathcal{L}
=
\mathcal{L}_{x}
+
\lambda_{1}\mathcal{L}_{z}
+
\lambda_{2}\mathcal{L}_{\rm{bd}},
\end{equation}
where $\mathcal{L}_{x}$ denotes the node-level forecasting loss,
$\mathcal{L}_{z}$ supervises the multi-quantile predictions of regional dynamics,
and $\mathcal{L}_{\rm{bd}}$ corresponds to the masked guidance reconstruction loss introduced for boundary modeling.
The hyperparameters $\lambda_{1}$ and $\lambda_{2}$ control the trade-off between fine-grained accuracy, macro-level uncertainty modeling, and robustness to missing guidance.
Formal definitions of each loss term are deferred to Appendix~\ref{app:loss}.

\paragraph{Computational Complexity.}
We analyze the computational complexity with respect to the number of nodes $N$, regions $M$ ($M < N$), latent dimension $d$, and the number of cross-attention layers $l$.
In each layer, the dominant cost arises from cross-attention between node-level and region-level tokens, resulting in a complexity of $\mathcal{O}(NMd)$ per layer and $\mathcal{O}(lNMd)$ per forward pass.
In contrast, a standard node-level transformer with full self-attention incurs $\mathcal{O}(lN^{2}d)$ complexity.
Since $M < N$ in practice, the proposed hierarchical design substantially reduces the quadratic dependence on the number of nodes, achieving near-linear scaling while preserving global contextual modeling.
\begin{table*}[ht]
\centering
\caption{Performance comparison on GBA, GLA, and CA datasets. \textbf{\textcolor{red}{Red}} indicates the best results, and \textbf{\textcolor{blue}{Blue}} indicates the second-best results. The row ``\textit{Improv.}'' denotes the relative improvement of our method over the best baseline. \ModelName\ consistently outperforms prior methods across datasets and horizons.}

\resizebox{\textwidth}{!}{
\begin{tabular}{c|l|ccc|ccc|ccc|ccc}
\toprule
\multirow{2}{*}{\textbf{Dataset}} & \multirow{2}{*}{\textbf{Method}} & \multicolumn{3}{c|}{\textbf{Horizon 3}} & \multicolumn{3}{c|}{\textbf{Horizon 6}} & \multicolumn{3}{c|}{\textbf{Horizon 12}} & \multicolumn{3}{c}{\textbf{Average}} \\
\cmidrule(lr){3-5} \cmidrule(lr){6-8} \cmidrule(lr){9-11} \cmidrule(lr){12-14}
 & & \textbf{MAE} & \textbf{RMSE} & \textbf{MAPE} & \textbf{MAE} & \textbf{RMSE} & \textbf{MAPE} & \textbf{MAE} & \textbf{RMSE} & \textbf{MAPE} & \textbf{MAE} & \textbf{RMSE} & \textbf{MAPE} \\
\midrule
\multirow{13}{*}{GBA} 
& {\sc GWNET}~\cite{RW:GWNet}   & 17.85 & 29.12 & 13.92 & 21.11 & 33.69 & 17.79 & 25.58 & 40.19 & 23.48 & 20.91 & 33.41 & 17.66 \\
& {\sc AGCRN}~\cite{RW:AGCRN}   & 18.31 & 30.24 & 14.27 & 21.27 & 34.72 & 16.89 & 24.85 & 40.18 & 20.80 & 21.01 & 34.25 & 16.90 \\
& {\sc STGODE}~\cite{RW:STGODE}  & 18.84 & 30.51 & 15.43 & 22.04 & 35.61 & 18.42 & 26.22 & 42.90 & 22.83 & 21.79 & 35.37 & 18.26 \\
& {\sc DSTAGNN}~\cite{RW:DSTAGNN} & 19.73 & 31.39 & 15.42 & 24.21 & 37.70 & 20.99 & 30.12 & 46.40 & 28.16 & 23.82 & 37.29 & 20.16 \\
& {\sc D2STGNN}~\cite{RW:D2STGNN} & 17.54 & 28.94 & \textcolor{blue}{12.12} & 20.92 & 33.92 & 14.89 & 25.48 & 40.99 & 19.83 & 20.71 & 33.65 & 15.04 \\
& {\sc DGCRN}~\cite{RW:DGCRN}   & 18.02 & 29.49 & 14.13 & 21.08 & 34.03 & 16.94 & 25.25 & 40.63 & 21.15 & 20.91 & 33.83 & 16.88 \\
& {\sc STID}~\cite{RW:STID}    & 17.36 & 29.39 & 13.28 & 20.45 & 34.51 & 16.03 & 24.38 & 41.33 & 19.90 & 20.22 & 34.61 & 15.91 \\
& {\sc STWave}~\cite{RW:STWave2023}  & 17.95 & 29.42 & 13.01 & 20.99 & 34.01 & 15.62 & 24.96 & 40.31 & 20.08 & 20.81 & 33.77 & 15.76 \\
& {\sc RPMixer}~\cite{RW:RPMixer} & 20.31 & 33.34 & 15.64 & 26.95 & 44.02 & 22.75 & 39.66 & 66.44 & 37.35 & 27.77 & 47.72 & 23.87 \\
& {\sc BigST}~\cite{RW:BigST2024}   & 18.70 & 30.27 & 15.55 & 22.21 & 35.33 & 18.54 & 26.98 & 42.73 & 23.68 & 21.95 & 35.54 & 18.50 \\
& {\sc PatchSTG}~\cite{RW:PatchSTG} & \textcolor{blue}{16.81} & \textcolor{blue}{28.71} & 12.25 & \textcolor{blue}{19.68} & \textcolor{blue}{33.09} & \textcolor{blue}{14.51} & \textcolor{blue}{23.49} & \textcolor{blue}{39.23} & \textcolor{blue}{18.93} & \textcolor{blue}{19.50} & \textcolor{blue}{33.16} & \textcolor{blue}{14.64} \\
\rowcolor{graybg}
& \textbf{\ModelName} & \textbf{\textcolor{red}{16.05}} & \textbf{\textcolor{red}{27.53}} & \textbf{\textcolor{red}{10.66}} & \textbf{\textcolor{red}{18.90}} & \textbf{\textcolor{red}{31.77}} & \textbf{\textcolor{red}{12.86}} & \textbf{\textcolor{red}{22.63}} & \textbf{\textcolor{red}{37.71}} & \textbf{\textcolor{red}{16.13}} & \textbf{\textcolor{red}{18.73}} & \textbf{\textcolor{red}{31.85}} & \textbf{\textcolor{red}{12.90}} \\
\rowcolor{graybg}
& \textit{Improv.} & \textbf{4.52\%} & \textbf{4.11\%} & \textbf{12.98\%} & \textbf{3.96\%} & \textbf{3.99\%} & \textbf{11.37\%} & \textbf{3.66\%} & \textbf{3.87\%} & \textbf{14.79\%} & \textbf{3.95\%} & \textbf{3.95\%} & \textbf{11.88\%} \\
\midrule
\multirow{11}{*}{GLA} 
& {\sc GWNET}~\cite{RW:GWNet}   & 17.28 & 27.68 & 10.18 & 21.31 & 33.70 & 13.02 & 26.99 & 42.51 & 17.64 & 21.20 & 33.58 & 13.18 \\
& {\sc AGCRN}~\cite{RW:AGCRN}   & 17.27 & 29.70 & 10.78 & 20.38 & 34.82 & 12.70 & 24.59 & 42.59 & 16.03 & 20.25 & 34.84 & 12.87 \\
& {\sc STGODE}~\cite{RW:STGODE}  & 18.10 & 30.02 & 11.18 & 21.71 & 36.46 & 13.64 & 26.45 & 45.09 & 17.60 & 21.49 & 36.14 & 13.72 \\
& {\sc DSTAGNN}~\cite{RW:DSTAGNN} & 19.49 & 31.08 & 11.50 & 24.27 & 38.43 & 15.24 & 30.92 & 48.52 & 20.45 & 24.13 & 38.15 & 15.07 \\
& {\sc STID}~\cite{RW:STID}    & 16.54 & 27.73 & 10.00 & 19.98 & 34.23 & 12.38 & 24.29 & 42.50 & 16.02 & 19.76 & 34.56 & 12.41 \\
& {\sc STWave}~\cite{RW:STWave2023}  & 17.48 & 28.05 & 10.06 & 21.08 & 33.58 & 12.56 & 25.82 & 41.28 & 16.51 & 20.96 & 33.48 & 12.70 \\
& {\sc RPMixer}~\cite{RW:RPMixer} & 19.94 & 32.54 & 11.53 & 27.10 & 44.87 & 16.58 & 40.13 & 69.11 & 27.93 & 27.87 & 48.96 & 17.66 \\
& {\sc BigST}~\cite{RW:BigST2024}   & 18.38 & 29.40 & 11.68 & 22.22 & 35.53 & 14.48 & 27.98 & 44.74 & 19.65 & 22.08 & 36.00 & 14.57 \\
& {\sc PatchSTG}~\cite{RW:PatchSTG} & \textcolor{blue}{15.84} & \textcolor{blue}{26.34} & \textcolor{blue}{9.27} & \textcolor{blue}{19.06} & \textcolor{blue}{31.85} & \textcolor{blue}{11.30} & \textcolor{blue}{23.32} & \textcolor{blue}{39.64} & \textcolor{blue}{14.60} & \textcolor{blue}{18.96} & \textcolor{blue}{32.33} & \textcolor{blue}{11.44} \\
\rowcolor{graybg}
& \textbf{\ModelName} & \textbf{\textcolor{red}{15.12}} & \textbf{\textcolor{red}{25.44}} & \textbf{\textcolor{red}{8.80}} & \textbf{\textcolor{red}{17.94}} & \textbf{\textcolor{red}{30.22}} & \textbf{\textcolor{red}{10.69}} & \textbf{\textcolor{red}{21.85}} & \textbf{\textcolor{red}{36.91}} & \textbf{\textcolor{red}{13.49}} & \textbf{\textcolor{red}{17.89}} & \textbf{\textcolor{red}{30.52}} & \textbf{\textcolor{red}{10.74}} \\
\rowcolor{graybg}
& \textit{Improv.} & \textbf{4.55\%} & \textbf{3.42\%} & \textbf{5.07\%} & \textbf{5.89\%} & \textbf{5.13\%} & \textbf{5.40\%} & \textbf{6.30\%} & \textbf{6.90\%} & \textbf{7.60\%} & \textbf{5.65\%} & \textbf{5.60\%} & \textbf{6.14\%} \\
\midrule
\multirow{9}{*}{CA} 
& {\sc GWNET}~\cite{RW:GWNet}   & 17.14 & 27.81 & 12.62 & 21.68 & 34.16 & 17.14 & 28.58 & 44.13 & 24.24 & 21.72 & 34.20 & 17.40 \\
& {\sc STGODE}~\cite{RW:STGODE}  & 17.57 & 29.91 & 13.91 & 20.98 & 36.62 & 16.88 & 25.46 & 45.99 & 21.00 & 20.77 & 36.60 & 16.80 \\
& {\sc STID}~\cite{RW:STID}    & 15.51 & 26.23 & 11.26 & 18.53 & 31.56 & 13.82 & 22.63 & 39.37 & 17.59 & 18.41 & 32.00 & 13.82 \\
& {\sc STWave}~\cite{RW:STWave2023}  & 16.77 & 26.98 & 12.20 & 18.97 & 30.69 & 14.40 & 25.36 & 38.77 & 19.01 & 19.69 & 31.58 & 14.58 \\
& {\sc RPMixer}~\cite{RW:RPMixer} & 18.18 & 30.49 & 12.86 & 24.33 & 41.38 & 18.34 & 35.74 & 62.12 & 30.38 & 25.07 & 44.75 & 19.47 \\
& {\sc BigST}~\cite{RW:BigST2024}   & 17.15 & 27.92 & 13.03 & 20.44 & 33.16 & 15.87 & 25.49 & 41.09 & 20.97 & 20.32 & 33.45 & 15.91 \\
& {\sc PatchSTG}~\cite{RW:PatchSTG} & \textcolor{blue}{14.69} & \textcolor{blue}{24.82} & \textcolor{blue}{10.51} & \textcolor{blue}{17.41} & \textcolor{blue}{29.43} & \textcolor{blue}{12.83} & \textcolor{blue}{21.20} & \textcolor{blue}{36.13} & \textcolor{blue}{16.00} & \textcolor{blue}{17.35} & \textcolor{blue}{29.79} & \textcolor{blue}{12.79} \\
\rowcolor{graybg}
& \textbf{\ModelName} & \textbf{\textcolor{red}{13.98}} & \textbf{\textcolor{red}{24.17}} & \textbf{\textcolor{red}{9.37}} & \textbf{\textcolor{red}{16.59}} & \textbf{\textcolor{red}{28.30}} & \textbf{\textcolor{red}{11.24}} & \textbf{\textcolor{red}{20.31}} & \textbf{\textcolor{red}{34.44}} & \textbf{\textcolor{red}{14.18}} & \textbf{\textcolor{red}{16.54}} & \textbf{\textcolor{red}{28.55}} & \textbf{\textcolor{red}{11.28}} \\
\rowcolor{graybg}
& \textit{Improv.} & \textbf{4.82\%} & \textbf{2.62\%} & \textbf{10.81\%} & \textbf{4.73\%} & \textbf{3.86\%} & \textbf{12.41\%} & \textbf{4.17\%} & \textbf{4.69\%} & \textbf{11.18\%} & \textbf{4.69\%} & \textbf{4.17\%} & \textbf{11.78\%} \\
\bottomrule
\end{tabular}%
}
\label{tab:main_res}
\end{table*}

\section{Experiments}
In this section, we present a comprehensive empirical evaluation on diverse large-scale benchmarks to validate the effectiveness and robustness of our proposed framework.
\subsection{Experiment Setup}

\textbf{Dataset}. We select the GLA, GBA, and CA datasets from the LargeST benchmark~\cite{RW:Dataset}, prioritizing their high node counts to rigorously test our model's scalability and spatial modeling capabilities. Following established settings~\cite{RW:PatchSTG}, we chronologically split the data into training, validation, and test sets with a 6:2:2 ratio. The forecasting task is framed as using 12 historical time steps as input to predict the subsequent 12 steps. Detailed datasets are presented in Appendix~\ref{app:dataset}.

\textbf{Baselines}. Our proposed method is compared with 11 advanced baselines to demonstrate its superiority. The benchmark models include {\sc STID}~\cite{RW:STID}, {\sc GWNET}~\cite{RW:GWNet}, {\sc AGCRN}~\cite{RW:AGCRN}, {\sc STGODE}~\cite{RW:STGODE}, {\sc DGCRN}~\cite{RW:DGCRN}, {\sc DSTAGNN}~\cite{RW:DSTAGNN}, {\sc D2STGNN}~\cite{RW:D2STGNN}, {\sc STWave}~\cite{RW:STWave2023}, {\sc RPMixer}~\cite{RW:RPMixer}, {\sc BigST}~\cite{RW:BigST2024}, and the current state-of-the-art {\sc PatchSTG}~\cite{RW:PatchSTG}. Detailed descriptions of these methods are provided in Appendix\ref{app:baselines}.

\textbf{Evaluation Metrics.} We evaluate forecasting performance using three standard metrics: Mean Absolute Error (MAE), Root Mean Square Error (RMSE), and Mean Absolute Percentage Error (MAPE). Their formal definitions and calculation formulas are provided in Appendix~\ref{app:metrics}.

\textbf{Implementation Details.} Consistent with existing literature, the forecasting task is configured with a look-back window $L=12$ and a prediction horizon $H=12$. Comprehensive implementation details, encompassing hyperparameter settings, optimization strategies, and hardware environments, are provided in Appendix~\ref{app:imple}.

\subsection{Performance Comparisons}

\textbf{Overall Performance.} Table \ref{tab:main_res} summarizes the comprehensive performance of \ModelName \ against all other baselines. Our framework consistently achieves the top performance across all datasets, metrics, and horizons (3, 6, and 12 steps). On average across all three datasets, \ModelName \ improves MAE by 4.71\%, RMSE by 4.41\%, and MAPE by 9.34\%, respectively. Notably, the improvements in MAPE exceed 10\% on GBA and CA, demonstrating a superior ability to handle high-dimensional volatility. This success stems from two core mechanisms: semantic regional aggregation and explicit macro-trend regularization. While baseline models often rely on passive historical aggregation, our approach utilizes future regional guidance to mitigate local noise, ensuring fine-grained node predictions remain aligned with broader semantic context. 

\begin{table}[htbp]
  \centering
  \caption{Long-horizon \textbf{MAE} comparison on GLA and CA datasets. We evaluate performance across increasing prediction steps (hours). Best results are highlighted in \textbf{bold}.}
  \label{tab:long_horizon}
  
  \resizebox{\linewidth}{!}{%
  \begin{tabular}{llccccc} 
    \toprule
    \multirow{2}{*}{\textbf{Dataset}} & \multirow{2}{*}{\textbf{Model}} & \multicolumn{5}{c}{\textbf{Prediction Horizon (Steps / Hours)}} \\
    \cmidrule(lr){3-7} 
     & & \textbf{16} (4h) & \textbf{20} (5h) & \textbf{24} (6h) & \textbf{36} (9h) & \textbf{48} (12h) \\
    \midrule

    \multirow{2}{*}{\textbf{GLA}} 
      & {\sc PatchSTG} & 25.63 & 27.08 & 27.92 & 30.42 & 32.43 \\
      & {\sc \ModelName{}} & \textbf{23.63} & \textbf{25.11} & \textbf{26.28} & \textbf{28.51} & \textbf{30.03} \\
    \midrule

    \multirow{2}{*}{\textbf{CA}} 
      & {\sc PatchSTG} & 24.17 & 25.52 & 26.54 & 28.10 & 29.15 \\
      & {\sc \ModelName{}} & \textbf{22.05} & \textbf{23.46} & \textbf{24.56} & \textbf{26.60} & \textbf{27.92} \\

    \bottomrule
  \end{tabular}%
  }
\end{table}

\textbf{Long-Horizon Stability.} To evaluate the temporal stability of our framework, we compare \ModelName{} against the strongest baseline, {\sc PatchSTG}, on the GLA and CA over an extended 48-step horizon (12 hours). Crucially, these forecasts were generated via auto-regressive rollout, using the model's own predictions step-by-step to align with realistic deployment scenarios where training separate models for every horizon is impractical. As detailed in Table~\ref{tab:long_horizon}, \ModelName{} demonstrates superior robustness, consistently outperforming {\sc PatchSTG} across all extended horizons. Although auto-regressive inference typically suffers from cumulative error propagation, our framework mitigates this by conditioning nodal forecasts on predicted regional trends. For example, on the GLA dataset, the performance gap in our favor expands from 2.0 MAE at step 16 to 2.4 MAE at step 48. These results confirm that leveraging macro-trends guidance as top-down context effectively stabilizes long-term rollouts and maintains coherence over extended periods. 

\textbf{Generalization to Non-Traffic Domains.} While our primary evaluation focuses on large-scale traffic networks, we further verify the broad applicability of \ModelName{} on diverse non-traffic datasets, including meteorology (KnowAir), energy (UrbanEV), and classical time-series benchmarks (Electricity, Solar-Energy). As detailed in Appendix~\ref{app:non_traffic_domains}, \ModelName{} achieves consistent and superior performance against recent state-of-the-art models (e.g., MAGE~\cite{mage2025}, Air-DualODE~\cite{airdualode2025}, and iTransformer~\cite{itransformer2024}) across these diverse domains. These supplementary results confirm that our data-driven semantic clustering and cross-scale attention mechanisms successfully capture general spatio-temporal dynamics without relying on traffic-specific physical priors.

\subsection{Ablation Study} 
We analyze the contribution of individual components by grouping experiments into two parts: the interaction mechanism (how regions and nodes communicate) and the semantic partitioning strategy (how regions are constructed). 

\paragraph{Impact of Macro-Micro Interaction.} To test our hierarchical coupling design, We compare \ModelName{} against two ablated variants: (i) \textbf{w/o CA (no Cross-Attention)}, which completely removes the cross-attention module, severing the link between regional and nodal representations; and (ii) \textbf{w/o FG (no Future Guidance)}, which replaces the predicted future regional signals $\mathbf{Z}_{t+1:t+P}$ with past region observations $\mathbf{Z}_{t-P+1:t}$. As detailed in Table~\ref{tab:abl_interaction}, both components prove to be essential for optimal performance. The removal of cross-attention (\textbf{w/o CA}) results in the most significant degradation, raising MAE by 1.04 on GBA and 1.11 on GLA. This substantial drop indicates that without the top-down regularization from region-level semantics, the model struggles to handle local noise effectively, degenerating into a purely local predictor. Similarly, the \textbf{w/o FG} variant, which relies solely on past regional patterns, fails to capture evolving temporal shifts. This leads to a notable increase in RMSE (e.g., +1.04 on GBA), suggesting that historical context alone is insufficient for predictive stability. Overall, these results show that explicitly modeling region–node interactions together with future-aware guidance is crucial for robust multi-step forecasting in \ModelName{}.

\begin{figure*}[ht]
    \centering
    \includegraphics[width=\linewidth]{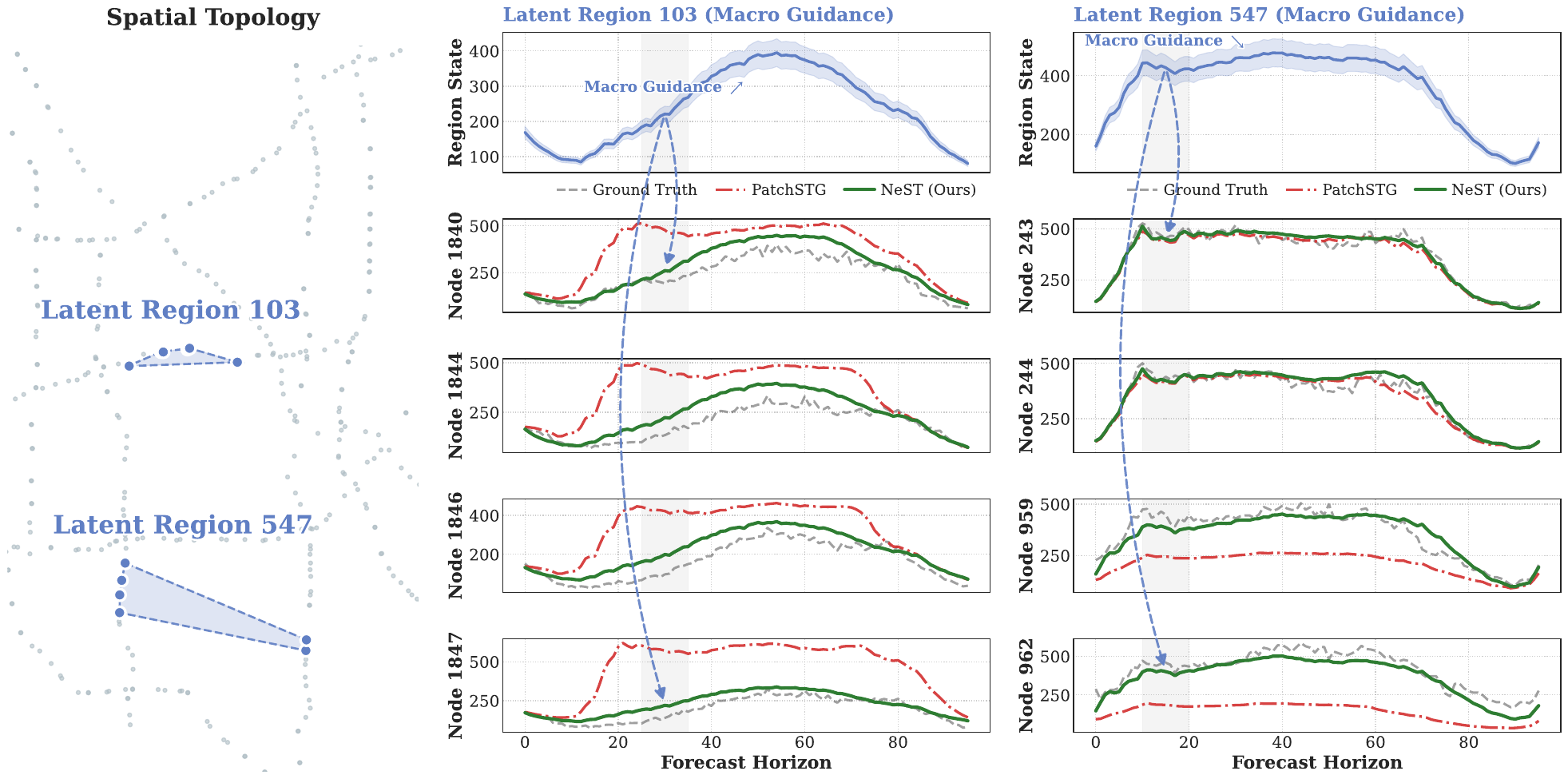}
    \caption{\textbf{Illustration of the Nested Spatio-Temporal Forecasting mechanism.} The left panel shows the spatial distribution of two learned latent regions. The right panels visualize the two-stage decoding process: (1) Macro Generation, where the model first predicts a stable regional trend (see top row); and (2) Micro Guidance, where this trend explicitly guides the forecasting of downstream nodes (bottom rows, indicated by arrows). As shown in the highlighted intervals, this mechanism enables \ModelName \ (red) to maintain robustness against local noise and align better with the Ground Truth (gray dashed) compared to {\sc PatchSTG} (green).}
    \label{fig:case_study}
\end{figure*}

\begin{table}[htbp]
  \centering
  \caption{Ablation results of Macro-Micro Interaction on GBA and GLA. Best results are highlighted in \textbf{bold}.}
  \label{tab:abl_interaction}
  \resizebox{\linewidth}{!}{
  \begin{tabular}{lcccccc}
    \toprule
    \multirow{2}{*}{\textbf{Variant}} & \multicolumn{3}{c}{\textbf{GBA}} & \multicolumn{3}{c}{\textbf{GLA}} \\
    \cmidrule(lr){2-4} \cmidrule(lr){5-7}
    & MAE & RMSE & MAPE (\%) & MAE & RMSE & MAPE (\%) \\ 
    \midrule
    w/o CA & 19.76 & 34.11 & 15.33 & 19.00 & 32.76 & 11.37 \\ 
    w/o FG & 19.64 & 32.89 & 14.88 & 18.85 & 32.04 & 11.03 \\ 
    \midrule
    \ModelName{} & \textbf{18.73} & \textbf{31.85} & \textbf{12.89} & \textbf{17.89} & \textbf{30.52} & \textbf{10.74} \\ 
    \bottomrule
  \end{tabular}}
\end{table}

\paragraph{Effectiveness of Semantic Partitioning.}  We evaluate our spatial partitioning strategy against four baselines: (i) \textbf{w/ KM (K-Means)}, which clusters nodes based on raw features similarity; (ii) \textbf{w/ RN (Random Node)}, which selects $M$ individual nodes as regional representatives; (iii) \textbf{w/ RP (Random Partition)}, which randomly assigns nodes to clusters; (iv) \textbf{w/ DA (Distance Adjacency)}, which uses static geographic proximity for spectral clustering. As shown in Table~\ref{tab:abl_partition}, the quality of regional definitions is pivotal. Arbitrary grouping (\textbf{w/ RP}) leads to a 13\% MAPE increase on GBA due to the lack of coherent macro patterns, while single-node representation (\textbf{w/ RN}) suffers from sensitivity to local noise. Crucially, our semantic approach outperforms both \textbf{w/ KM} and \textbf{w/ DA}, reducing MAE by 0.21 on GBA and 0.45 on GLA. Unlike \textbf{w/ KM}, which ignores graph topology, or \textbf{w/ DA}, which is limited by static geometry, our method captures functional spatial dependencies, providing a robust foundation for stable regional modeling.

\begin{table}[htbp]
  \centering
  \caption{Ablation results of Semantic Partitioning Strategy on GBA and GLA. Best results are highlighted in \textbf{bold}.}
  \label{tab:abl_partition}
  \resizebox{\linewidth}{!}{
  \begin{tabular}{lcccccc}
    \toprule
    \multirow{2}{*}{\textbf{Variant}} & \multicolumn{3}{c}{\textbf{GBA}} & \multicolumn{3}{c}{\textbf{GLA}} \\
    \cmidrule(lr){2-4} \cmidrule(lr){5-7}
    & MAE & RMSE & MAPE (\%) & MAE & RMSE & MAPE (\%) \\ 
    \midrule
    w/ KM & 18.93 & 32.33 & 13.46 & 18.39 & 31.55 & 10.82 \\ 
    w/ RN & 19.44 & 32.96 & 14.42 & 18.46 & 32.42 & 10.85 \\ 
    w/ RP & 19.07 & 32.47 & 14.27 & 18.46 & 31.50 & 11.22 \\ 
    w/ DA & 18.93 & 32.22 & 13.37 & 18.34 & 31.99 & 11.04 \\ 
    \midrule
    {\sc \ModelName{}} & \textbf{18.73} & \textbf{31.85} & \textbf{12.89} & \textbf{17.89} & \textbf{30.52} & \textbf{10.74} \\ 
    \bottomrule
  \end{tabular}}
\end{table}

\subsection{Computational Efficiency and Runtime Analysis}

Table~\ref{tab:runtime_efficiency} compares the empirical runtime of \ModelName{} against the strong baseline PatchSTG on the GBA and GLA datasets, with all tests conducted on a single RTX 4090 GPU using a batch size of 64. \ModelName{} substantially accelerates both the training and inference stages. Specifically, on the GBA dataset, the training time per epoch drops from 185 to 75 seconds (a 59.5\% reduction), and the total inference time decreases from 32 to 20 seconds (a 37.5\% reduction). Similar efficiency gains are also observed on the larger GLA dataset. While \ModelName{} requires an offline preprocessing step for affinity matrix construction and spectral clustering (e.g., 91 seconds on the GBA dataset), this procedure is executed only once. Consequently, this one-time overhead is negligible relative to the overall training process. These results demonstrate that the proposed linear cross-attention mechanism and patch-wise prediction design effectively constrain computational complexity, making the framework highly efficient in practice.

\begin{table}[!htb]
    \centering
    \caption{Empirical runtime comparison on the GBA and GLA datasets (Batch Size = 64, single RTX 4090 GPU). \ModelName{} significantly reduces both training and inference costs.}
    \label{tab:runtime_efficiency}
    \resizebox{\linewidth}{!}{
    \begin{tabular}{llcc}
        \toprule
        Dataset & Model & Training Time (s/epoch) & Inference Time (s) \\
        \midrule
        \multirow{2}{*}{GBA} & PatchSTG & 185 & 32 \\
         & \textbf{\ModelName{} (Ours)} & \textbf{75} & \textbf{20} \\
        \midrule
        \multirow{2}{*}{GLA} & PatchSTG & 235 & 42 \\
         & \textbf{\ModelName{} (Ours)} & \textbf{137} & \textbf{37} \\
        \bottomrule
    \end{tabular}
    }
\end{table}

\subsection{In-Depth Analysis}

\textbf{Case Study.} Figure \ref{fig:case_study} visualizes the nested forecasting mechanism on two representative latent regions. \textbf{Spatial Distribution (Left):} The learned regions successfully group nodes with synchronized traffic patterns, regardless of physical distance. For instance, nodes in Region 547 are distributed across disparate road segments yet share a consistent traffic evolution, confirming that our model captures semantic functional similarities beyond mere geographical proximity. \textbf{Mechanism Analysis (Right):} The right panels illustrate the coarse-to-fine generation process. At the macro-level (top plots), \ModelName{} first extracts distinct regional trends, such as the bell-shaped pattern in Region 103 and the sharp decline in Region 547. Explicitly guided by these macro-signals (indicated by the dashed arrows), our model (red solid line) achieves precise micro-level predictions. As observed in the highlighted intervals, the macro guidance effectively directs individual nodes to adapt to sudden traffic shifts. Consequently, \ModelName{} aligns closely with the Ground Truth (grey line), whereas the baseline {\sc PatchSTG} (green dashed line) fails to capture these dynamics, exhibiting significant lag or overestimation during trend transitions.

\textbf{Sensitivity to Region Count $M$}. We investigate the sensitivity of \ModelName{} to the number of latent regions $M$ on both GBA and GLA datasets. The results reveal a consistent U-shaped pattern, achieving optimal performance at $M=0.2N$. To further validate this, we conduct a fine-grained analysis on the GBA dataset (detailed in Appendix \ref{app:impact_of_M}). Specifically, prediction errors decrease as $M$ increases from $0.1N$ to $0.2N$ but rebound when $M$ is further increased to $0.3N$. We attribute this phenomenon to a critical trade-off in granular modeling: a small $M$ ($0.1N$) likely causes over-aggregation, smoothing out distinct local patterns, whereas a large $M$ ($0.3N$) becomes susceptible to structural noise and fails to filter spurious correlations. Consequently, $M=0.2N$ strikes the optimal balance, effectively abstracting stable macro-trends while preserving necessary micro-dynamics.

\begin{figure}[h]
    \centering
    \includegraphics[width=\linewidth]{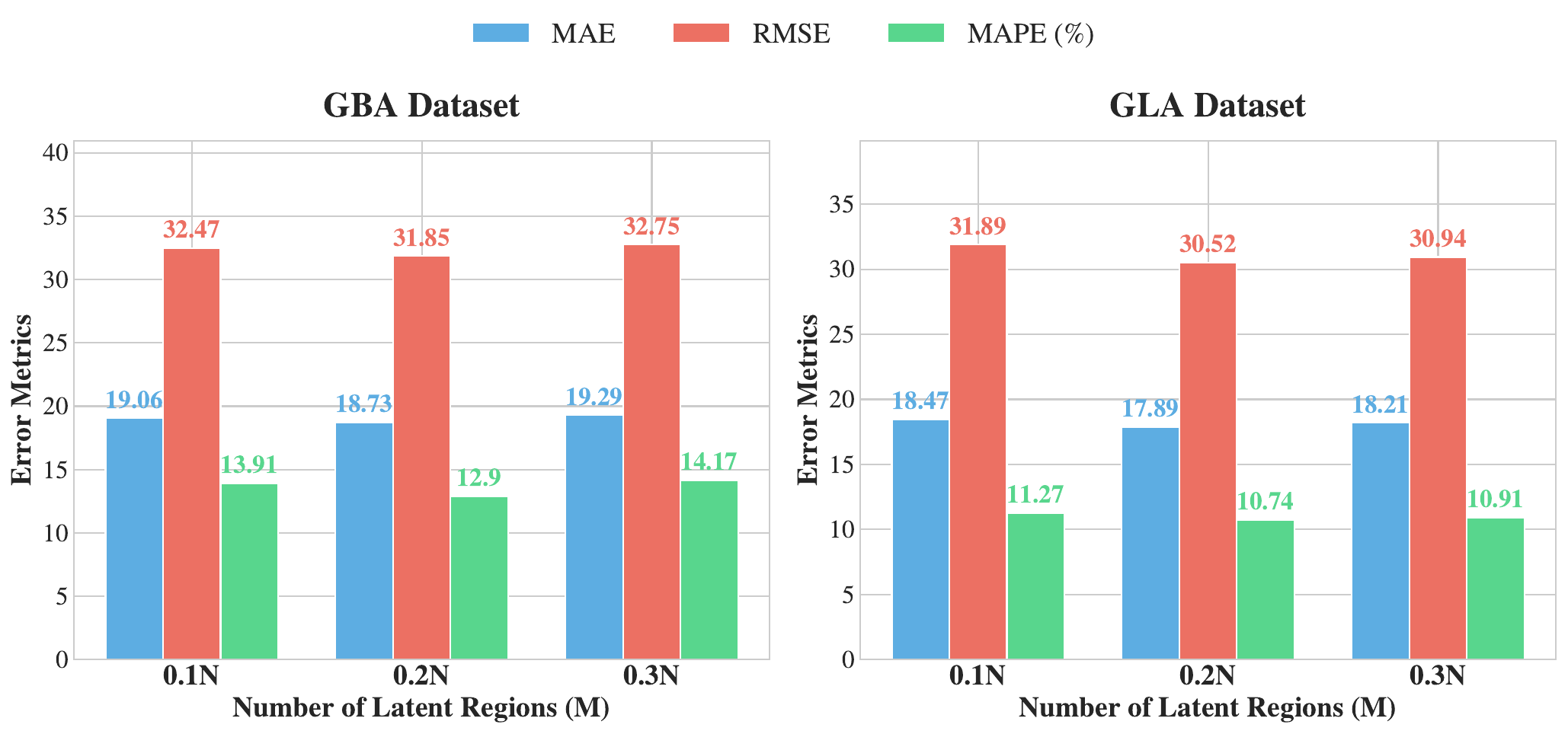}
    \caption{Forecasting performance with varying numbers of latent regions ($M$) on GBA and GLA.}
    \label{fig:impact_of_M}
\end{figure}

\section{Conclusion}
In this paper, we proposed \ModelName{}, a novel macro-to-micro framework designed to tackle local noise sensitivity and cascading error accumulation in high-dimensional spatio-temporal forecasting. By leveraging data-driven semantic clustering without relying on physical priors, \ModelName{} extracts stable regional representations that act as a natural low-pass filter against local volatility. Crucially, explicitly predicted future macro-trends serve as top-down guidance to regularize micro-level dynamics, inherently anchoring the autoregressive rollout and mitigating long-horizon error propagation. Extensive experiments across diverse domains, including traffic, meteorology, energy, and classical time series, demonstrate that \ModelName{} achieves state-of-the-art accuracy and exceptional long-term stability. Ultimately, this work provides a highly generalizable paradigm for multiscale sequence modeling in complex, real-world environments.

\section*{Impact Statement} 
This work advances the field of Machine Learning by proposing a robust framework for spatio-temporal time-series forecasting. The proposed method has potential applications in smart city operations, energy management, and environmental monitoring, which can lead to more efficient and sustainable infrastructure. We are not aware of any immediate negative societal consequences or specific ethical issues associated with this methodological research, provided that standard data privacy and fairness protocols are followed during deployment.

While \ModelName{} demonstrates strong performance, we acknowledge several limitations. First, constructing the feature-driven affinity matrix introduces a preprocessing overhead that scales quadratically ($\mathcal{O}(N^2)$). Second, our global clustering assumes time-invariant spatial correlations, limiting the model's adaptability to sudden topological shifts or transient events. Third, training via future macro-trend reconstruction (teacher forcing) introduces an exposure bias, creating a training-inference gap during autoregressive generation. Finally, although autoregressive rollout stabilizes long-horizon errors, its sequential nature is slower than direct multi-step models, introducing inference latency that may pose challenges for strict real-time applications at massive scales. Addressing these constraints via dynamic regionalization and scheduled sampling remains a promising avenue for future work.

\section*{Acknowledgements}
This work was supported by the Pujiang Talent Program (No. 2025PJA729), National Natural Science Foundation of China (Grant Nos. 82394432, 92249302, and 62276127), the Shanghai Municipal Science and Technology Major Project (Grant No. 2023SHZDZX02), the Brain Science and Brain-like Intelligence Technology - National Science and Technology Major Project (Grant No.2021ZD0201300), the Fundamental and Interdisciplinary Disciplines Breakthrough Plan of the Ministry of Education of China (Grant No. JYB2025XDXM118), and the “111” Center (Grant No. B26023).

\bibliography{icml2026_conference}
\bibliographystyle{icml2026}

\newpage
\appendix
\onecolumn 
\section{Appendix}

\subsection{Dataset.}\label{app:dataset}
Our experiments utilize the GLA, GBA, and CA subsets from the LargeST benchmark~\cite{RW:Dataset}, which comprise traffic data collected via the CalTrans PeMS sensor network. We focus on records from 2019, aggregated into 15-minute intervals (96 daily records). As summarized in Table~\ref{tab:datasets}, these datasets provide a large-scale evaluation environment, covering diverse regional scales with node counts ranging from 2,352 to 8,600 and totaling over 300 million observations.

\begin{table}[htbp]
    \centering
    \caption{Dataset statistics.}
    \label{tab:datasets}
    \resizebox{0.5\linewidth}{!}{
        \begin{tabular}{lcccc}
            \toprule
            Datasets & \#Nodes & \#Samples & \#TimeSlices & Timespan \\
            \midrule
            GBA & 2352 & 82M  & 35040 & 01/01/2019-12/31/2019 \\
            GLA & 3834 & 134M & 35040 & 01/01/2019-12/31/2019 \\
            CA  & 8600 & 301M & 35040 & 01/01/2019-12/31/2019 \\
            \bottomrule
        \end{tabular}
    }
\end{table}

\subsection{Baselines}\label{app:baselines}
We compare the proposed approach with the following advanced baselines:
\begin{itemize}
    \item {\sc STID}: It proposes an efficient MLP baseline that solves sample indistinguishability in forecasting by incorporating spatial and temporal identity information.
    \item {\sc GWNET}: It captures hidden spatial dependencies using a learned adaptive matrix and models long-range temporal trends via stacked dilated 1D convolutions.
    \item {\sc AGCRN}: It employs node-adaptive parameter learning and automatic graph generation to capture fine-grained spatial-temporal correlations without requiring pre-defined graphs.
    \item {\sc STGODE}: It captures synchronous spatial-temporal dynamics using tensor-based ordinary differential equations and semantic adjacency matrices.
    \item {\sc DGCRN}: It leverages hyper-networks to generate dynamic filter parameters and time-varying graphs from node attributes, integrating them with static structures while optimizing efficiency by limiting decoder iterations.
    \item {\sc DSTAGNN}: It replaces pre-defined graphs with a data-driven dynamic graph, using multi-head attention and multi-scale gated convolutions to capture complex spatial-temporal dependencies.
    \item {\sc D2STGNN}: It uses an estimation gate and residual decomposition to decouple and independently model diffusion and inherent traffic signals.
    \item {\sc STWave}: It decouples traffic into trends and events using a dual-channel framework with wavelet-based positional encoding and a query sampling strategy.
    \item {\sc RPMixer}: It models temporal dynamics via MLPs and spatial relationships through the integration of random projection layers.
    \item {\sc BigST}: It is a spatial-temporal graph neural networks characterized by linear complexity, which allows for the efficient exploitation of long-range dependencies in large-scale traffic forecasting task. 
    \item {\sc PatchSTG}: It groups nodes via KDTree-based irregular spatial patching and utilizes depth and breadth attention to model local and global dependencies.
\end{itemize}

\subsection{Evaluation Metrics}\label{app:metrics} To comprehensively evaluate the model performance, and in line with previous works~\cite{RW:PatchSTG}, we utilize three standard metrics: Mean Absolute Error (MAE), Root Mean Squared Error (RMSE), and Mean Absolute Percentage Error (MAPE). These metrics provide a multi-faceted assessment of prediction accuracy, capturing both average deviations and sensitivity to extreme values. Let $y_i$ denote the ground truth, $\hat{y}_i$ represent the predicted value, and $n$ be the total number of samples; the specific definitions and characteristics of these metrics are as follows:
\begin{itemize}
    \item \textbf{Mean Absolute Error (MAE):} This metric calculates the average of the absolute differences between the predicted and actual values, providing a linear penalty for all errors. Because it does not square the deviations, MAE is more robust to outliers and represents the basic magnitude of the forecasting error.
    \begin{equation}
        \text{MAE} = \frac{1}{n} \sum_{i=1}^{n} |y_i - \hat{y}_i|
    \end{equation}
    \item  \textbf{Root Mean Squared Error (RMSE):} By squaring the errors before averaging, RMSE assigns a significantly higher weight to large deviations. This makes it a sensitive indicator of the model's stability and its ability to avoid large-scale prediction failures in critical traffic scenarios.
    \begin{equation}
        \text{RMSE} = \sqrt{\frac{1}{n} \sum_{i=1}^{n} (y_i - \hat{y}_i)^2}
    \end{equation}
    \item \textbf{Mean Absolute Percentage Error (MAPE):} This metric expresses the error as a percentage of the ground truth, offering a scale-independent measure. It is particularly valuable in traffic forecasting for comparing performance across different road segments or time periods with varying traffic volumes.
    \begin{equation}
        \text{MAPE} = \frac{1}{n} \sum_{i=1}^{n} \left| \frac{y_i - \hat{y}_i}{y_i} \right| \times 100\%
    \end{equation}
\end{itemize}

\subsection{Proof of Theorem 1}\label{app:therom}

\textbf{Theorem}: Let $\mathcal{C}_m$ be a cluster of size $|\mathcal{C}_m|$ with center $\mathbf{Z}_m$. The SNR of $\mathbf{Z}_m$ satisfies:
\begin{equation}
\mathrm{SNR}(\mathbf{Z}_m) \geq [ 1 + (|\mathcal{C}_m| - 1) \rho_m ] \cdot \overline{\mathrm{SNR}}_m,
\end{equation}
where $\overline{\mathrm{SNR}}_m$ is the average individual SNR, and $\rho_m$ is the average pairwise correlation of true signals $\mathbf{S}$ within the cluster.

\begin{proof}
We consider a single region $\mathcal{C}_m$. For any node $i \in \mathcal{C}_m$, its signal can be decomposed as $\mathbf{X}_i = \mathbf{S}_i + \boldsymbol{\epsilon}_i$, where $\mathbf{S}_i$ is the deterministic signal component and $\boldsymbol{\epsilon}_i \sim \mathcal{N}(0, \sigma^2 \mathbf{I}_C)$ is i.i.d. noise. Then, we define $\bar{\mathbf{S}}_m = \frac{1}{n_k} \sum_{i \in \mathcal{C}_m} \mathbf{S}_i$. 

The cluster center is $\mathbf{Z}_m = \frac{1}{|\mathcal{C}_m|} \sum_{i \in \mathcal{C}_m} \mathbf{X}_i = \bar{\mathbf{S}}_m + \frac{1}{|\mathcal{C}_m|} \sum_{i \in \mathcal{C}_m} \boldsymbol{\epsilon}_i$. The power of the signal component in $\mathbf{Z}_m$ is $\|\bar{\mathbf{S}}_m\|^2$. The noise component $\frac{1}{|\mathcal{C}_m|} \sum_{i} \boldsymbol{\epsilon}_i$ has covariance $\frac{\sigma^2}{|\mathcal{C}_m|} \mathbf{I}_C$. Thus, the SNR of $\mathbf{Z}_m$ is
\begin{equation}
\label{equ:SNR_center}
\operatorname{SNR}(\mathbf{Z}_m) = \frac{\|\bar{\mathbf{s}}_m\|^2}{\sigma^2 / |\mathcal{C}_m|} = |\mathcal{C}_m| \cdot \frac{\|\bar{\mathbf{s}}_m,\|^2}{\sigma^2}.
\end{equation}
For an individual node $i \in \mathcal{C}_m$, its SNR is $\operatorname{SNR}(\mathbf{x}_i) = \|\mathbf{S}_i\|^2 / \sigma^2$. The average SNR within the cluster is
\begin{equation}
\label{equ:SNR_cluster}
\overline{\operatorname{SNR}}_{m} = \frac{1}{|\mathcal{C}_m|} \sum_{i \in \mathcal{C}_m} \frac{\|\mathbf{S}_i\|^2}{\sigma^2}.
\end{equation}
By the Cauchy–Schwarz inequality,
\begin{equation}
\left\| \bar{\mathbf{S}}_m \right\|^2 = \frac{1}{|\mathcal{C}_m|^2} \left\| \sum_{i} \mathbf{S}_i \right\|^2 \geq \frac{1}{|\mathcal{C}_m|^2} \left( \sum_{i} \|\mathbf{S}_i\| \right)^2.
\end{equation}
Furthermore, recall the average correlation coefficient $\rho_m$ among the $\mathbf{s}_i$ is defined by:
\begin{equation}
\begin{aligned}
    \rho_m \triangleq&  \frac{1}{|\mathcal{C}_m|(|\mathcal{C}_m| - 1)} \sum_{i \neq j \in \mathcal{C}_m} \mathrm{Corr}(\mathbf{S}_i, \mathbf{S}_j) \\
    =& \frac{1}{|\mathcal{C}_m|(|\mathcal{C}_m| - 1)} \sum_{i \neq j \in \mathcal{C}_m} \frac{\mathbf{S}_i^\top \mathbf{S}_j}{\|\mathbf{S}_i\| \|\mathbf{S}_j\|}. \\
\end{aligned}
\end{equation}
We can obtain the final format of $\left\| \bar{\mathbf{s}}_m \right\|^2$:
\begin{equation}
\left\| \bar{\mathbf{S}}_m \right\|^2 \geq \frac{1 + (|\mathcal{C}_m| - 1)\rho_m}{|\mathcal{C}_m|} \cdot \frac{1}{|\mathcal{C}_m|} \sum_{i} \|\mathbf{S}_i\|^2.
\end{equation}
Substituting this into \eqref{equ:SNR_center} and using \eqref{equ:SNR_cluster} yields
\begin{equation}
\operatorname{SNR}(\mathbf{Z}_m) \geq \left[1 + (|\mathcal{C}_m| - 1) \rho_m\right] \cdot \overline{\operatorname{SNR}}_{m}.
\end{equation}
\end{proof}
The inequality for the global representation $\mathbf{Z}$ follows by noting that $\operatorname{SNR}(\mathbf{Z})$ is essentially a weighted average of the $\operatorname{SNR}(\mathbf{Z}_m)$, and that the maximum possible enhancement is constrained by the region with the smallest intra-cluster correlation $\rho_m$ and size $|\mathcal{C}_m|$. The factor $N/K$ arises from the dimensionality reduction from $N$ nodes to $M$ regions.

This theorem theoretically validates the denoising mechanism of our framework: the SNR gain scales with both cluster size $|\mathcal{C}_m|$ and intra-cluster correlation $\rho_m$. Since our spectral clustering naturally groups nodes with high signal synchronization (maximizing $\rho_m$), the resulting region-level representations $\mathbf{Z}$ effectively suppress independent local noise while preserving the structural signal fidelity. 

\subsection{Implementation Details}\label{app:imple}
All experiments were implemented using the PyTorch framework and conducted on a single NVIDIA A100 80GB GPU. We optimize the model using the AdamW~\cite{RW:AdamW} optimizer with a fixed learning rate of $3\times10^{-4}$. To prevent overfitting, we employed an early stopping strategy, terminating the training if the validation loss did not decrease for 30 consecutive epochs. Regarding the model hyperparameters, the dimension of input embeddings was set to 128, while the latent dimension for the attention mechanism was fixed at 256. The number of latent regions $K$ was adaptively set $0.2N$, where $N$ denotes the number of nodes in the GBA, GLA and CA respectively. The single-step prediction horizon $P$ was set to $\{4, 6,4\}$ for GBA, GLA and CA. Accordingly, the loss balancing coefficients for regional prediction ($\lambda_1$) and boundary predictions ($\lambda_2$) were configured as (0.1, 0.2), (0.3, 0.3), and (0.1, 0.2), respectively.

\subsection{Loss Function Definitions}
\label{app:loss}
\paragraph{Node-level forecasting loss.}
For node-level prediction, we adopt the Huber loss to balance robustness to outliers and sensitivity to small errors. 
Given the prediction error $e = \hat{x} - x$, the Huber loss is defined as
\begin{equation}
\ell_{\text{Huber}}(e) =
\begin{cases}
\frac{1}{2}e^{2}, & |e| \le \delta, \\
\delta(|e| - \frac{1}{2}\delta), & \text{otherwise},
\end{cases}
\end{equation}
where $\delta$ is a predefined threshold. 
The node-level loss $\mathcal{L}_{x}$ is computed by averaging the Huber loss over all nodes and prediction horizons.

\paragraph{Quantile-based regional forecasting loss.}\label{app:quantile_loss}
To model predictive uncertainty at the regional level, we predict a set of quantiles $\{\tau_q\}_{q=1}^{Q}$ for each region and optimize the corresponding pinball loss. 
For a given quantile $\tau \in (0,1)$ and prediction error $e = \hat{z}_{\tau} - z$, the pinball loss is defined as
\begin{equation}
\rho_{\tau}(e) = \max(\tau e, (\tau - 1)e).
\end{equation}
The regional forecasting loss $\mathcal{L}_{z}$ is obtained by averaging the pinball loss across all regions, quantiles, and prediction horizons.

\paragraph{Masked guidance reconstruction loss.}
The masked guidance reconstruction loss $\mathcal{L}_{\text{bd}}$ applies the same quantile-based pinball loss to the outputs of the guidance decoder. 
During training, regional inputs are randomly masked, and the decoder is supervised to reconstruct the corresponding future regional states, encouraging robustness to missing or noisy guidance signals.

\subsection{Period-aligned Chunk Partition for Affinity Construction}
\label{app:chunk_partition}

To construct the feature-driven affinity matrix $\mathbf{A}$ from raw temporal observations, we partition the training sequence into $\tilde{T}$ non-overlapping temporal chunks. The key motivation is to align the chunking scheme with the intrinsic periodicity of the data, so that $\mathbf{A}$ captures similarity in \emph{long-term temporal evolution} rather than short-term fluctuations.

\paragraph{Empirical periodicity analysis.}
For each dataset, we conduct a preliminary periodicity analysis on the training split (e.g., via autocorrelation / spectral inspection) and observe a clear dominant period of approximately $P=100$ time steps across all three datasets. Based on this observation, we set the number of chunks to
\begin{equation}
\tilde{T} = 100,
\end{equation}
and partition the training sequence into $100$ consecutive, non-overlapping chunks. For each node, we compute an averaged representation within each chunk, and define pairwise affinities by aggregating the chunk-wise distances, as described in Eq.~(1).

\paragraph{Rationale.}
This period-aligned chunking ensures that each chunk corresponds to a consistent seasonal phase of the underlying temporal process, thereby encouraging $\mathbf{A}$ to encode node-wise similarity in coarse-grained periodic dynamics. As a result, the constructed affinity matrix is less sensitive to transient noise and short-term irregular variations.

\subsection{Effect of Cross-Scale Interaction Depth}\label{app:layer}

We study the sensitivity of our model to the depth of the Cross-Scale Interaction module. Specifically, we vary the number of Cross-Transformer layers in $\{4,5,6,7\}$ while keeping all other hyperparameters unchanged. We report the averaged multi-step forecasting performance over all horizons (Steps 1--12) in terms of MAE, RMSE and MAPE.

\begin{table}[hbtp]
\centering
\caption{Hyperparameter study on the number of Cross-Transformer layers. Results are averaged over all prediction horizons (Steps 1--12).}
\label{tab:layer_depth}
\begin{tabular}{c|ccc|ccc}
\hline
\multirow{2}{*}{Layers} & \multicolumn{3}{c|}{GBA} & \multicolumn{3}{c}{GLA} \\
 & MAE & RMSE & MAPE & MAE & RMSE & MAPE \\
\hline
4 & \textbf{18.73} & \textbf{31.85} & \textbf{12.90} & 18.42 & 31.67 & 11.01 \\
5 & 19.09 & 32.77 & 13.53 & 18.44 & 31.89 & 10.94 \\
6 & 19.06 & 32.57 & 13.58 & \textbf{17.89} & \textbf{30.52} & \textbf{10.74} \\
7 & 19.12 & 32.45 & 13.79 & 18.21 & 31.24 & 10.76 \\
\hline
\end{tabular}
\end{table}

As shown in Table~\ref{tab:layer_depth}, the performance is generally stable across different Cross-Transformer depths on both datasets. On GBA, the best results are achieved with 4 layers, obtaining the lowest MAE, RMSE, and MAPE, while deeper configurations do not bring further gains and may slightly degrade performance. In contrast, on GLA, increasing the depth to a moderate level is beneficial: 6 layers yield the best MAE, RMSE, and MAPE, whereas further increasing the depth to 7 layers leads to a minor performance drop. Overall, the differences across depths remain small (within approximately 2\% MAE variation between the best and worst settings on both datasets), indicating that our model is robust to the choice of Cross-Transformer depth.

\subsection{Effectiveness of Quantile Forecasting}\label{app:quantile}

\begin{figure*}[t]
    \centering
    \includegraphics[width=1.0\textwidth]{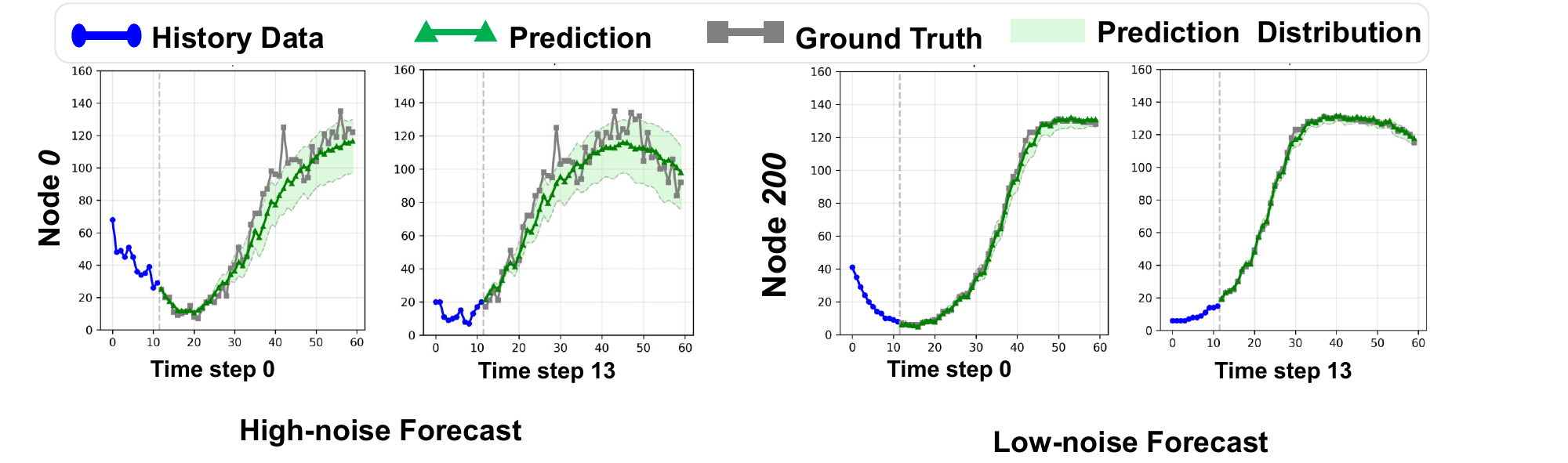}
    \caption{\textbf{Quantile forecasting under different noise regimes.}
    We visualize the quantile prediction results using three quantile levels $\{0.1, 0.5, 0.9\}$, where the shaded region denotes the predictive distribution spanned by the lower and upper quantiles.
    In the high-noise case (left), the quantile band widens to reflect increased uncertainty and covers most of the ground-truth trajectory, indicating well-calibrated uncertainty estimation.
    In the low-noise case (right), the quantiles collapse into a narrow band and closely match the ground truth, suggesting that the model appropriately reduces uncertainty when the signal is stable.}
    \label{fig:quantile_vis}
\end{figure*}

To assess the reliability of our probabilistic forecasting module, we report quantile predictions at levels $\{0.1, 0.5, 0.9\}$, corresponding to the lower bound, median, and upper bound, respectively. As shown in Fig.~\ref{fig:quantile_vis}, the quantile forecasts exhibit desirable behavior across different noise regimes. In the high-noise setting, the predicted quantile band provides a calibrated uncertainty estimate, where the interval between the $0.1$ and $0.9$ quantiles expands to reflect increased stochasticity in the observations. Importantly, this uncertainty band captures the majority of the ground-truth trajectory, indicating that the model can produce informative prediction distributions rather than over-confident point estimates. In contrast, in the low-noise setting, the three quantiles become tightly concentrated and nearly overlap with each other as well as with the ground truth, suggesting that the model correctly reduces predictive uncertainty when the signal is stable. Overall, these results demonstrate that our quantile prediction is both adaptive and well-calibrated, yielding accurate distributional forecasts under varying noise conditions.

\subsection{Algorithmic Framework}
The overall algorithmic procedure of our framework consists of two main stages: latent structure initialization and model optimization. First, to construct the hierarchical structure from node signals, we employ \textbf{Graph Spectral Clustering}~\cite{ng2001spectral} as detailed in Algorithm \ref{alg:spectral_clustering}. This process groups nodes with synchronized patterns into latent regions. Subsequently, the training workflow of \ModelName, presented in Algorithm \ref{alg:training}, integrates a boundary modeling task with a scheduled sampling strategy (teacher forcing decay) to bridge the training-inference gap and jointly optimize the coarse-to-fine forecasting objectives.

\begin{algorithm}[ht]
\caption{Graph Spectral Clustering}
\label{alg:spectral_clustering}
\begin{algorithmic}[1]
\REQUIRE Node signals $\mathbf{X}\in\mathbb{R}^{N\times T}$, adjacency $\mathbf{A}\in\mathbb{R}^{N\times N}$, clusters $M$, kernel width $\sigma$, K-Means iterations $K$, initializations $n_{\text{init}}$.
\ENSURE Assignment matrix $\mathbf{S}\in\{0,1\}^{N\times M}$, prototypes $\mathbf{C}\in\mathbb{R}^{M\times T}$.

\STATE \textbf{Preprocess:} $\mathbf{A}_{sym}\leftarrow(\mathbf{A}+\mathbf{A}^\top)/2$; preprocess $\mathbf{X}$ if needed.
\STATE \textbf{Construct similarity (Gaussian):} $d_{ij}=\|x_i-x_j\|^2$, $\mathbf{W}_{ij}=\exp\!\left(-\frac{d_{ij}}{2\sigma^2}\right)$, and $\mathbf{W}_{ii}=0$.
\STATE Degree $\mathbf{D}_{ii}=\sum_j \mathbf{W}_{ij}$; normalized Laplacian $\mathbf{L}_{sym}=\mathbf{I}-\mathbf{D}^{-1/2}\mathbf{W}\mathbf{D}^{-1/2}$.
\STATE Compute $M$ smallest eigenvectors $\mathbf{U}_M$ of $\mathbf{L}_{sym}$; row-normalize $\mathbf{U}\leftarrow \mathrm{RowNorm}(\mathbf{U}_M)$.
\STATE Run K-Means on rows of $\mathbf{U}$ with $n_{\text{init}}$ restarts and $K$ iterations; obtain one-hot assignment $\mathbf{S}$.
\STATE Prototypes: $\mathbf{C}=(\mathbf{S}^\top\mathbf{S})^{-1}\mathbf{S}^\top\mathbf{X}$.
\STATE \textbf{return} $\mathbf{S},\mathbf{C}$.
\end{algorithmic}
\end{algorithm}

\begin{algorithm}[h]
   \caption{\ModelName \ Training Algorithm}
   \label{alg:training}
   \begin{algorithmic}[1]
       \REQUIRE Dataset $\mathcal{D}$, Max Epochs $E$, Decay $\gamma$, Min prob $r$.
       \STATE Initialize parameters $\theta$, set sampling prob $p_{tf} \leftarrow 1.0$.
       \FOR{epoch $e = 1$ to $E$}
          \STATE $p_{tf} \leftarrow \max(r, \text{Decay}(p_{tf}, e, \gamma))$
          \FOR{batch $(u_t, z_{t+1}, u_{t+1}, z_{t+2})$ in $\mathcal{D}$}
              \STATE \textbf{1. Boundary Gen:} 
              \STATE \quad $\hat{z}_{t+1} = f_\theta(u_t, \mathbf{0})$ 
              
              \STATE \textbf{2. Sampling (Scheduled Sampling):} 
              \STATE \quad Draw mask $m \sim \text{Bernoulli}(p_{tf})$
              \STATE \quad $\tilde{z}_{t+1} = m \cdot z_{t+1} + (1 - m) \cdot \text{StopGrad}(\hat{z}_{t+1})$
              
              \STATE \textbf{3. Prediction:} 
              \STATE \quad $\hat{u}_{t+1}, \hat{z}_{t+2} = f_\theta(u_t, \tilde{z}_{t+1})$
              
              \STATE \textbf{4. Optimization:}
              \STATE \quad $\mathcal{L}_{task} = \mathcal{L}_{u}(\hat{u}_{t+1}, u_{t+1}) + \mathcal{L}_{z2}(\hat{z}_{t+2}, z_{t+2}) + \lambda \cdot \mathcal{L}_{z1}(\hat{z}_{t+1}, z_{t+1})$
              \STATE \quad Update $\theta \leftarrow \text{Optimizer}(\theta, \nabla \mathcal{L}_{task})$
          \ENDFOR
       \ENDFOR
   \end{algorithmic}
\end{algorithm}

\subsection{Prediction Comparison}
To qualitatively evaluate the forecasting performance, we visualize the multi-step prediction results of \ModelName{} against the state-of-the-art baseline, {\sc PatchSTG}, on representative nodes from the GLA dataset. Figure \ref{fig:prediction_vis} illustrates the forecasted traffic flows alongside the generated macro-level trends. As observed, the baseline method exhibits significant limitations in capturing rapid trend shifts. For Node 3195 (Left), {\sc PatchSTG} fails to anticipate the sharp downward trend, predicting a continuous high-traffic volume (blue dashed line) while the ground truth (grey line) drops significantly. This indicates a lack of awareness of the broader contextual evolution. Conversely, for Node 2264 (Right), {\sc PatchSTG} underestimates the traffic demand, predicting an immediate decline to near-zero values, whereas the actual traffic remains high before eventually dropping. In contrast, \ModelName{} demonstrates superior robustness. By explicitly conditioning the micro-level generation on the predicted macro-trends (Top Row), our model successfully rectifies these deviations. For Node 3195, the macro-module forecasts a clear downward signal, guiding the node predictor to accurately track the flow reduction. Similarly, for Node 2264, the stable high-traffic macro-signal prevents the model from collapsing to zero early. These comparisons highlight that our hierarchical coupling mechanism effectively mitigates forecasting errors caused by local noise and ensures alignment with the true temporal dynamics.

\begin{figure}[bth]
    \centering
    \includegraphics[width=\linewidth]{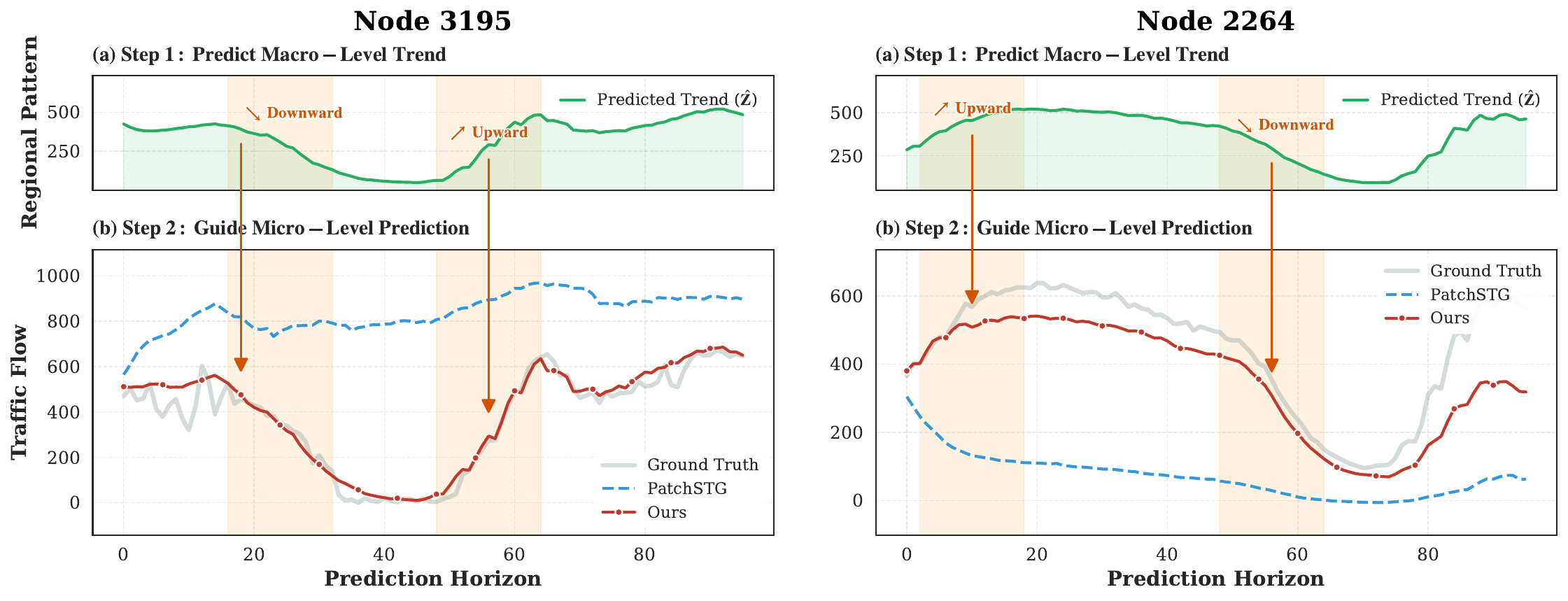}
    \caption{\textbf{Qualitative comparison of multi-step forecasting results on GBA dataset.} We visualize the predictions for two distinct nodes: Node 3195 (Left) and Node 2264 (Right). \textbf{(a) Top Panels:} The macro-level regional trends predicted by our auxiliary module, which serve as guidance. \textbf{(b) Bottom Panels:} The comparison of fine-grained node predictions. The baseline {\sc PatchSTG} (\textcolor{blue}{blue dashed line}) struggles to capture sharp transitions, leading to significant overestimation (Node 3195) or underestimation (Node 2264). By incorporating the macro-guidance, \textbf{\ModelName} (\textcolor{red}{red solid line}) effectively corrects these errors, yielding predictions that closely match the \textbf{Ground Truth} (\textcolor{gray}{grey solid line}) across the entire horizon.}
    \label{fig:prediction_vis}
\end{figure}

\subsection{Fine-grained Sensitivity Analysis of Region Number}
\label{app:impact_of_M}

\begin{table}[!htb]
    \centering
    \caption{Fine-grained forecasting performance (MAE) with varying numbers of latent regions ($M$) on the GBA dataset.}
    \label{tab:impact_of_M_appendix}
    \begin{tabular}{l|ccccccc}
        \toprule
        \multirow{2}{*}{Dataset} & \multicolumn{7}{c}{Number of Regions ($M$)} \\
        \cmidrule{2-8}
        & $0.10N$ & $0.15N$ & $0.18N$ & $0.20N$ & $0.22N$ & $0.25N$ & $0.30N$ \\
        \midrule
        GBA & 19.06 & 19.03 & 18.89 & \textbf{18.73} & 18.96 & 19.14 & 19.29 \\
        \bottomrule
    \end{tabular}
\end{table}

To comprehensively evaluate the robustness of our spatial granularity design, Table \ref{tab:impact_of_M_appendix} provides the detailed, fine-grained forecasting performance of \ModelName{} with varying numbers of latent regions ($M$) specifically on the GBA dataset.

\subsection{Generalization to Non-Traffic Domains}
\label{app:non_traffic_domains}

To further verify the applicability and robustness of \ModelName{} beyond traffic forecasting, we evaluate it on diverse non-traffic spatio-temporal datasets as well as classical long-term time series benchmarks.

\textbf{Non-Traffic Spatio-Temporal Forecasting.}
We first evaluate \ModelName{} on two representative non-traffic datasets: \textbf{KnowAir} from the meteorology domain and \textbf{UrbanEV} from the energy domain. We compare our method with recent state-of-the-art spatio-temporal baselines, including Air-DualODE~\cite{airdualode2025}, PatchSTG, and MAGE~\cite{mage2025}. As shown in Table~\ref{tab:non_traffic_st}, \ModelName{} consistently achieves the best performance on both datasets. These results indicate that our data-driven semantic clustering and cross-scale attention mechanisms can effectively capture underlying spatio-temporal dynamics without relying on traffic-specific physical priors.

\begin{table}[!htb]
    \centering
    \caption{Performance comparison (MAE / RMSE) on non-traffic spatio-temporal forecasting datasets.}
    \label{tab:non_traffic_st}
    \begin{tabular}{l|cc}
        \toprule
        Model & KnowAir (Meteorology) & UrbanEV (Energy) \\
        \midrule
        Air-DualODE  & 18.64 / 29.37 & - \\
        PatchSTG     & 16.08 / 24.70 & 5.16 / 11.53 \\
        MAGE         & 15.36 / 23.42 & 4.95 / 11.00 \\
        \midrule
        \textbf{\ModelName{} (Ours)} & \textbf{14.87} / \textbf{22.89} & \textbf{4.37} / \textbf{10.26} \\
        \bottomrule
    \end{tabular}
\end{table}

\textbf{Classical Long-Term Time Series Forecasting.}
We further evaluate \ModelName{} on two standard long-term forecasting benchmarks, \textbf{Electricity} and \textbf{Solar-Energy}, to assess its general time series modeling capability. Following the common setup, both the input length and prediction horizon are set to 96 ($96 \rightarrow 96$). We compare \ModelName{} with strong temporal transformer-based baselines, including PatchTST~\cite{patchtst2023}, iTransformer~\cite{itransformer2024}, and TimeMixer~\cite{timemixer2024}. As reported in Table~\ref{tab:classical_ts}, \ModelName{} achieves competitive or superior performance on both datasets, demonstrating that the proposed macro-to-micro guidance paradigm is effective not only for irregular spatial graphs, but also for general multivariate time series forecasting.

\begin{table}[!htb]
    \centering
    \caption{Performance comparison (MSE / MAE) on classical long-term time series forecasting benchmarks ($96 \rightarrow 96$).}
    \label{tab:classical_ts}
    \begin{tabular}{l|cc}
        \toprule
        Model & Electricity & Solar-Energy \\
        \midrule
        PatchTST     & 0.190 / 0.296 & 0.265 / 0.323 \\
        iTransformer & 0.148 / 0.240 & 0.203 / 0.237 \\
        TimeMixer    & 0.153 / 0.247 & \textbf{0.189} / 0.259 \\
        \midrule
        \textbf{\ModelName{} (Ours)} & \textbf{0.141} / \textbf{0.235} & 0.201 / \textbf{0.211} \\
        \bottomrule
    \end{tabular}
\end{table}

\subsection{Visualization of Centroid Features and Noise Filtering}
\label{app:visualization}

To intuitively illustrate how the proposed macro-to-micro paradigm handles local anomalies, Figure~\ref{fig:centroid_vis} compares raw micro-node sequences with their corresponding macro centroid features. While individual node trajectories often contain high-frequency noise and sharp local fluctuations, the macro centroids effectively preserve the smoother regional trends. This observation suggests that semantic clustering serves as a natural low-pass filter. By providing stable regional anchors, the cross-scale attention mechanism can more reliably correct noisy micro-level predictions. In this way, macro-level consistency helps shield node-level forecasting from localized anomalies and transient structural noise.

\begin{figure}[h]
    \centering
    \includegraphics[width=0.8\linewidth]{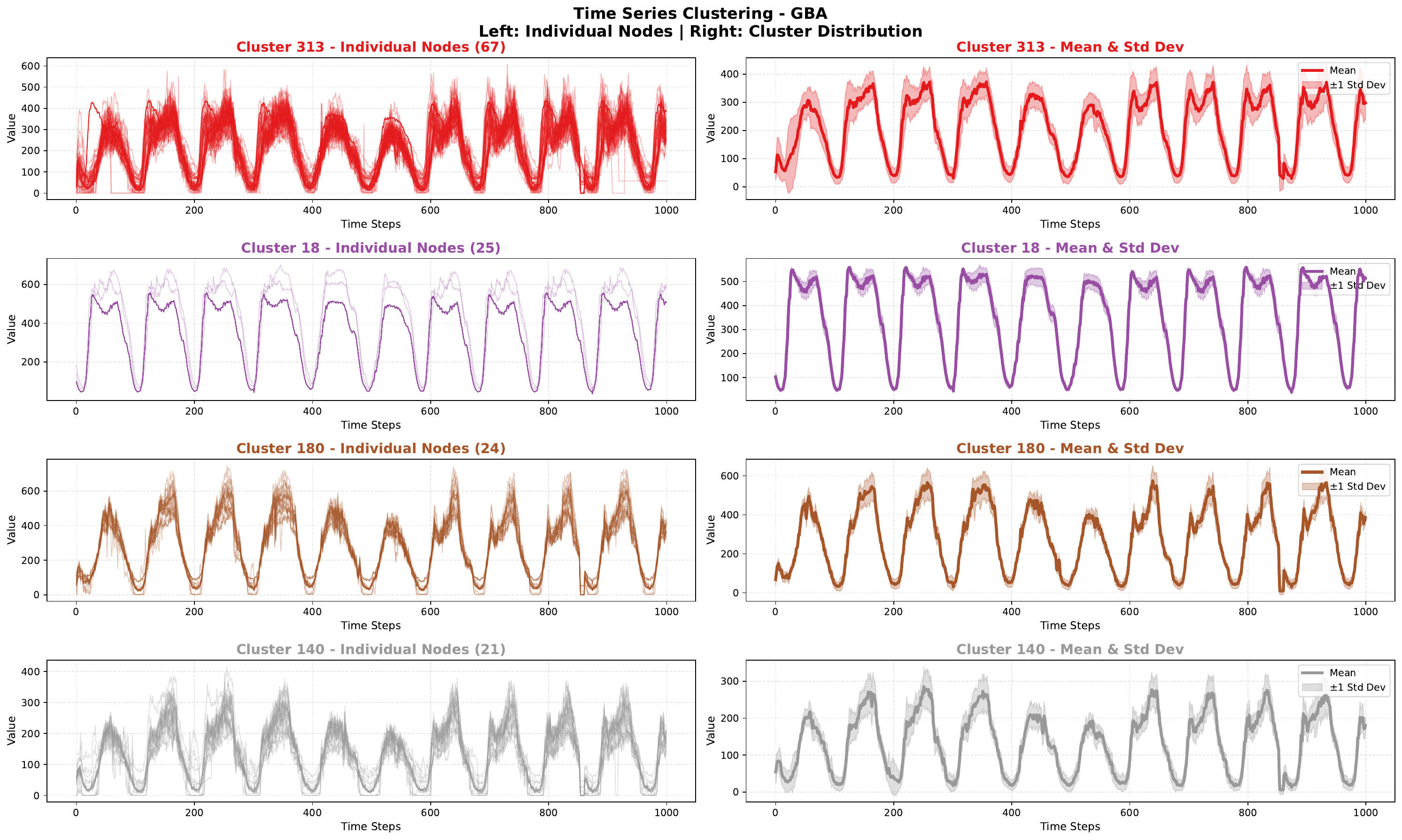}
    \caption{Visual comparison between raw node sequences (exhibiting high-frequency noise) and their corresponding macro centroid features (preserving smooth trends).}
    \label{fig:centroid_vis}
\end{figure}

\end{document}